\let\chapter\section
\newcommand\ee{\mathbf{e}}
\newcommand{\XX}{G}
\newcommand{\xx}{g}
\newcommand\lh{\hat{\xx}}
\newcommand\Lh{\hat{\XX}}
\newcommand\Lt{\tilde{\XX}}
\DeclareMathOperator*{\argmax}{arg\,max}
\newcommand{\Regret}{\mathrm{Regret}}
\newcommand{\PP}{\mathbb{P}}
\newcommand{\ind}{\mathbf{1}}
\newcommand{\EE}{\mathbb{E}}
\newcommand{\RR}{\mathbb{R}}
\newcommand{\f}{\Phi}
\newcommand{\tf}{{\tilde{\f}}}
\newcommand{\gtf}{\nabla {\tilde{\f}}}
\newcommand{\htf}{\nabla^2 {\tilde{\f}}}
\def\K{\mathcal{K}}
\def\II{\mathbb{I}}
\newcommand{\D}{\Delta}
\newtheorem{theorem}{Theorem}[section]
\newtheorem{lemma}[theorem]{Lemma}
\newtheorem{corollary}[theorem]{Corollary}
\newtheorem{conjecture}[theorem]{Conjecture}
\newtheorem{definition}[theorem]{Definition}
\newenvironment{framework}[1][htb]
  {% Update algorithm name
   \begin{algorithm}[#1]%
  }{\end{algorithm}}
\title{Fighting Bandits with a New Kind of Smoothness}
\author{
Jacob Abernethy \\
University of Michigan\\
\texttt{jabernet@umich.edu} \\
\And
Chansoo Lee \\
University of Michigan\\
\texttt{chansool@umich.edu} \\
\And
Ambuj Tewari \\
University of Michigan\\
\texttt{tewaria@umich.edu} \\
}
\begin{document}

\maketitle

%!TEX root=bandits_hazard_rate.tex

\begin{abstract}

We define a novel family of algorithms for the adversarial multi-armed bandit problem, and provide a simple analysis technique based on convex smoothing. We prove two main results. First, we show that regularization via the \emph{Tsallis entropy}, which includes EXP3 as a special case, achieves the $\Theta(\sqrt{TN})$ minimax regret. Second, we show that a wide class of perturbation methods achieve a near-optimal regret as low as $O(\sqrt{TN \log N})$ if the perturbation distribution has a bounded hazard rate. 
For example, the Gumbel, Weibull, Frechet, Pareto, and Gamma distributions all satisfy this key property.
% while the Gaussian and Uniform distributions do not.

% We focus on the adversarial multi-armed bandit problem. The EXP3 algorithm of \citet{AueCesFreSch03nonstochastic} was shown to have a regret bound of $O(\sqrt{T N \log N})$, where $T$ is the time horizon and $N$ is the number of available actions (arms). More recently, \citet{audibert2009minimax} improved the bound by a logarithmic factor via an entirely different method. In the present work, we provide a new set of analysis tools, using the notion of convex smoothing, to provide several novel algorithms with optimal guarantees. First we show that regularization via the \emph{Tsallis entropy} gives an alternative formulation of the optimal algorithm of \cite{audibert2009minimax}, with a simple analysis and a natural connection to EXP3. Second we show that a wide class of perturbation methods lead to near-optimal bandit algorithms as long as a simple condition on the perturbation distribution $\mathcal{D}$ is met: the \emph{hazard function} of $\mathcal{D}$ needs to remain bounded. The Gumbel, Weibull, Frechet, Pareto, and Gamma distributions all satisfy this key property; interestingly, the Gaussian and Uniform distributions do not.
\end{abstract}

%!TEX root=bandits_hazard_rate.tex

\section{Introduction}

The classic \emph{multi-armed bandit} (MAB) problem, generally attributed to the early work of \cite{Robbins52}, poses a generic online decision scenario in which an agent must make a sequence of choices from a fixed set of options. After each decision is made, the agent receives some feedback in the form of a loss (or gain) associated with her choice, but no information is provided on the outcomes of alternative options. The agent's goal is to minimize the total loss over time, and the agent is thus faced with the balancing act of both experimenting with the menu of choices while also utilizing the data gathered in the process to improve her decisions. The MAB framework is not only mathematically elegant, but useful for a wide range of applications including medical experiments design \citep{gittins1996quantitative}, automated poker playing strategies \citep{van2009monte}, and hyperparameter tuning \citep{pacula2012hyperparameter}.

Early MAB results relied on stochastic assumptions (e.g., IID) on the loss sequence \citep{gittins2011multi,LaiRobbins85,auer2002finite}. As researchers began to establish non-stochastic, \emph{worst-case} guarantees for sequential decision problems such as \emph{prediction with expert advice} \citep{LitWar94}, a natural question arose as to whether similar guarantees were possible for the bandit setting. The pioneering work of \citet*{AueCesFreSch03nonstochastic} answered this in the affirmative by showing that their algorithm EXP3 possesses nearly-optimal regret bounds with matching lower bounds. Attention later turned to the bandit version of \emph{online linear optimization}, and several associated guarantees were published the following decade \citep{McMahBlum04,FlaKalMcMah05,DanHay06robbing,DanHayKak07price,abernethy2012interior}.

Nearly all proposed methods have relied on a particular algorithmic blueprint; they reduce the bandit problem to the full-information setting, while using randomization to make decisions and to \emph{estimate} the losses. A well-studied family of algorithms for the full-information setting is \emph{Follow the Regularized Leader} (FTRL), which optimizes the objective function of the following form:
\begin{equation} \label{eq:ftrl}
	\mathop{\arg\min}_{x \in \mathcal{K}} \langle L, x  \rangle + \lambda R(x) 
\end{equation}
where $\K$ is the decision set, $L$ is (an estimate of) the cumulative loss vector, and $R$ is a \emph{regularizer}, a convex function with suitable curvature to stabilize the objective.
The choice of regularizer $R$ is critical to the algorithm's performance.
For example, the EXP3 algorithm \citep{auer2003using} regularizes with the \emph{entropy function} and achieves a nearly optimal regret bound when $\K$ is the probability simplex.
For a general convex set, however, other regularizers such as \emph{self-concordant barrier functions} \citep{abernethy2012interior} have tighter regret bounds.
 % regularize with a 
 % \citep{NesterovNemirovskii94siam} in order to handle a general convex set $\K$. 
% For example, \cite{abernethy2012interior} utilized a reduction to

Another class of algorithms for the full information setting is \emph{Follow the Perturbed Leader} (FTPL) \citep{KV-FTL} whose foundations date back to the earliest work in adversarial online learning \citep{Hannan57}. Here we choose a distribution $\mathcal{D}$ on $\RR^N$, sample a random vector $Z \sim \mathcal{D}$, and solve the following \emph{linear} optimization problem
\begin{equation} \label{eq:ftpl}
	\mathop{\arg\min}_{x \in \mathcal{K}} \langle L+Z, x \rangle
\end{equation}
FTPL is computationally simpler than FTRL due to the linearity of the objective, but it is analytically much more complex due to the randomness. For every different choice of $\mathcal{D}$, an entirely new set of techniques had to be developed \citep{devroye2013prediction,van2014follow}. \cite{rakhlin2012relax} and \citet{abernethy2014online} made some progress towards unifying the analysis framework. Their techniques, however, are limited to the full-information setting.

% by showing that both perturbation and regularization algorithms perform convex smoothing of the potential function. The regret analysis then boils down to bounding the second derivative of the smoothed potential function. 

% Perturbation methods are a natural fit for bandit problems as they are fundamentally randomized.
 % resulting potential function showing that FTPL for any specific distribution $\mathcal{D}$ can be viewed as an instance of FTRL for a corresponding $R(\cdot)$; i.e. that the expected solution to \eqref{eq:ftpl} corresponds to the deterministic solution to \eqref{eq:ftrl}. But the utility of this result has been limited as the reverse reduction is out of reach in general: given a regularizer $R(\cdot)$ with certain properties, it is not clear how to construct a corresponding distribution $\mathcal{D}$ which mimics these properties.

In this paper, we propose a new analysis framework for the multi-armed bandit problem that unifies the regularization and perturbation algorithms. The key element is a new kind of smoothness property, which we call \emph{differential consistency}. It allows us to generate a wide class of both optimal and near-optimal algorithms for the adversarial multi-armed bandit problem. We summarize our main results:
\begin{enumerate}

 	\item We show that regularization via the \emph{Tsallis entropy} leads to the state-of-the-art adversarial MAB algorithm, matching the minimax regret rate of \cite{audibert2009minimax} with a tighter constant. Interestingly, our algorithm fully generalizes EXP3.
 	\item We show that a wide array of well-studied noise distributions lead to near-optimal regret bounds (matching those of EXP3). Furthermore, our analysis reveals a strikingly simple and appealing sufficient condition for achieving $O(\sqrt{T})$ regret: the \emph{hazard rate} of the noise distribution must be bounded by a constant in the tail region. We conjecture that this requirement is in fact both necessary and sufficient.
 \end{enumerate} 

% \todo[inline]{Chansoo: can you somehow make some form of this paragraph appear in the hazard section? We need to make sure these guys are cited, since some of them will like definitely be reviewing our paper.}

% In the present paper we give clarity on the latter questions by showing that, for the standard algorithmic template, the type of noise distribution $\mathcal{D}$ that would give rise to a $O(\sqrt{T})$ regret bound can be almost entirely characterized by one simple property of the distribution: \emph{the hazard function}. In short, if the hazard function of $\mathcal{D}$ is bounded from above by a constant, then the regret scales as $O(\sqrt{T})$. If the hazard function increases without bound then $O(\sqrt{T})$ is unattainable. In particular, this gives further intuition as to why the exponential distribution, whose hazard function is constant, is suitable for the bandit setting, and also supports the use of other bounded-hazard distributions such as Weibull, Pareto, and Cauchy. We also observe that the Gaussian and uniform distributions, among others, are not well-suited for low regret.

% \cite{AueCesFreSch03nonstochastic} 

% \cite{AbeRak09colt,kujala2005following,neu2013efficient,KV-FTL,abernethy2014online}
%!TEX root=bandits_hazard_rate.tex

\section{Gradient-Based Prediction Algorithms for the Multi-Armed Bandit}

Let us now introduce the adversarial multi-armed bandit problem. On each round $t = 1,\ldots, T$, a learner must choose a distribution $p_t \in \D_N$ over the set of $N$ available actions. The adversary (Nature) chooses a loss vector $\xx_t \in [-1,0]^N$. The learner plays action $i_t$ sampled according to $p_t$ and suffers the loss $\xx_{t,{i_t}}$. The learner observes only a single coordinate $\xx_{t,{i_t}}$ and receives no information as to the values $\xx_{t,j}$ for $j \ne i_t$. 
This limited information feedback is what makes the bandit problem much more challenging than the full-information setting in which the entire $\xx_t$ is observed. 

The learner's goal is to minimize the \emph{regret}. Regret is defined to be the difference in the realized loss and the loss of the best fixed action in hindsight:
\begin{equation} \label{eq:regret}
	\Regret_T := \max_{i \in [N]} \sum_{t=1}^T (\xx_{t,i} - \xx_{t,i_t}).
\end{equation}
To be precise, we consider the \emph{expected} regret, where the expectation is taken with respect to the learner's randomization. 
% Nature's choices, on the other hand, are assumed to be deterministic throughout.

\paragraph{Loss vs. Gain Note:} The maximization in \eqref{eq:regret} would imply that $\xx$ is strictly speaking a negative \emph{gain}. Nevertheless, we use the term \emph{loss}, as we impose the assumption that $\xx_t \in [-1,0]^N$ throughout the paper. 
% \jake{I might vote that we just change the letter $\xx$ to $g$ to make this clear; this is what they do in \cite{AueCesFreSch03nonstochastic}}
  % in fact we are assuming  denotes a negative reward. So the best arm given a loss estimate $\Lh$ is $\argmax_{i = 1,\ldots, N} \Lh_{i}$ as opposed to $\arg\min$. In other words, we formulated the bandit problem as reward maximization problem where all reward are in fact negative valued.

\pagebreak

\subsection{The Gradient-Based Algorithmic Template}

We study a particular algorithmic template described in Framework~\ref{alg:gbpa_bandit}, which is  a slight variation of the Gradient Based Prediction Algorithm (GBPA) of \cite{abernethy2014online}. Note that the algorithm \begin{inparaenum}[(i)] \item maintains an \emph{unbiased estimate} of the cumulative losses $\Lh_t$, \item updates $\Lh_t$ by adding a single-round estimate $\lh_t$, and \item uses the gradient of a convex function $\tf$ as sampling distribution $p_t$.
\end{inparaenum}
 % that is, $p_t = \gtf(\Lh_{t-1})$. 
 The choice of $\tf$ is flexible, it must be a differentiable convex function such that its gradient is always a probability distribution. 
 % After observing the loss $\xx_{t,i_t}$, we assume the algorithm updates the estimate according to the rule $\Lh_t := \Lh_{t-1} + \lh_t$, where the single round estimate $\lh_t$ is set to be the all-zeros vector aside from the $i$-th coordinate, whose value is $\xx_{t,i} / p_i(\Lh_{t-1})$.

Framework~\ref{alg:gbpa_bandit} may appear restrictive but it has served as the basis for much of the published work on adversarial MAB algorithms \citep{AueCesFreSch03nonstochastic, kujala2005following, neu2013efficient} mainly for two reasons.
First, the GBPA framework encompasses all FTRL and FTPL algorithms, which are the core techniques for sequential prediction algorithms \citep{abernethy2014online}. 
Second, 
% the estimation scheme ensures that $\Lh_t$ remains an unbiased estimate of $\XX_t$.
%specific form of the importance weighted estimates $\lh_t$ was chosen to
% Although there are other unbiased estimation schemes,
%is some flexibility in the choice of the unbiased estimation scheme, but 
although there is some flexibility, any unbiased estimation scheme would require some kind of inverse-probability scaling. Information theory tells us that the unbiased estimates of a quantity that is observed with only probabilty $p$ must necessarily involve fluctuations that scale as $O(1/p)$.

\begin{framework}
\caption{Gradient-Based Prediction Alg. (GBPA) Template for Multi-Armed Bandits.}
GBPA$(\tf)$: $\tf$ is a differentiable convex function such that $\gtf \in \Delta^{N}$ and $\nabla_i \tf > 0$ for all $i$. \\
\label{alg:gbpa_bandit}
%\textbf{Input}: $\cX, \cY \subseteq \RR^N$\\
Initialize $\Lh_0 =0$\\
\For{t = 1 to T}{
\textbf{Nature:} A loss vector $\xx_{t} \in [-1,0]^{N}$ is chosen by the Adversary\\
\textbf{Sampling:} Learner chooses $i_t$ according to the distribution $p(\Lh_{t-1}) = \nabla \Phi_{t}(\Lh_{t-1})$\\
\textbf{Cost:} Learner ``gains'' loss $\xx_{t,i_t}$\\
\textbf{Estimation:} Learner ``guesses'' $\lh_{t} := \frac{\xx_{t,i_t}}{p_{i_t}(\Lh_{t-1})}\ee_{i_t}$\\
\textbf{Update:} $\Lh_{t} = \Lh_{t-1} + \lh_t$ 
}
\end{framework}

\begin{lemma}
\label{lem:genericregret1} Define $\f(\XX) \equiv \max_{i} \XX_i$ so that we can write the expected
regret of GBPA$(\tf)$ as \[\EE\Regret_T = \f(\XX_T) - \textstyle \sum_{t=1}^{T}\langle \gtf(\Lh_{t-1}), \xx_t \rangle.\] Then, the expected regret of GBPA$(\tf)$ 
can be written as:
\begin{equation}
\label{eq:genericregret}
\EE\Regret_T \leq
 \underbrace{\tf(0) - \f(0)}_{\text{overestimation penalty}} 
+ \EE_{i_1, \ldots, i_{t-1}}\bigg[\underbrace{\f(\Lh_T) - \tf(\Lh_T)}_{\text{underestimation penalty}} 
+ \sum_{t = 1}^{T} \underbrace{\EE_{i_t}[D_{\tf}(\Lh_t, \Lh_{t-1})| \Lh_{t-1}]}_{\text{divergence penalty}}\bigg],
\end{equation}
where the expectations are over the sampling of $i_t$.
\end{lemma}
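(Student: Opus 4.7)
The plan is to treat $\tf$ as a potential function and apply a standard ``telescoping plus Bregman divergence'' argument, then convert $\tf$ back to $\f$ via Jensen's inequality. First I would verify the stated identity $\EE\Regret_T = \f(\XX_T) - \sum_{t=1}^{T}\langle \gtf(\Lh_{t-1}), \xx_t\rangle$ by taking expectation of $\xx_{t,i_t}$ with $i_t \sim p(\Lh_{t-1}) = \gtf(\Lh_{t-1})$; this uses that $\max_i \XX_{T,i} = \f(\XX_T)$ and that $\XX_T$ is deterministic (or handled by conditioning for an adaptive adversary).

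Next I would exploit the key property of the estimator, namely $\EE[\lh_t \mid \Lh_{t-1}] = \xx_t$, which follows directly from the inverse-probability scaling $\lh_t = \frac{\xx_{t,i_t}}{p_{i_t}(\Lh_{t-1})}\ee_{i_t}$. This lets me rewrite $\langle \gtf(\Lh_{t-1}), \xx_t\rangle = \EE[\langle \gtf(\Lh_{t-1}), \lh_t\rangle \mid \Lh_{t-1}]$. Then, by the definition of the Bregman divergence with $\Lh_t = \Lh_{t-1} + \lh_t$,
\[
\langle \gtf(\Lh_{t-1}), \lh_t \rangle \;=\; \tf(\Lh_t) - \tf(\Lh_{t-1}) - D_{\tf}(\Lh_t, \Lh_{t-1}).
\]
Summing from $t=1$ to $T$, the first two terms telescope to $\tf(\Lh_T) - \tf(0)$, and taking total expectation yields
\[
\sum_{t=1}^{T} \EE\langle \gtf(\Lh_{t-1}), \xx_t\rangle \;=\; \EE\bigl[\tf(\Lh_T)\bigr] - \tf(0) - \EE\!\sum_{t=1}^{T}\EE_{i_t}\bigl[D_{\tf}(\Lh_t,\Lh_{t-1})\mid \Lh_{t-1}\bigr].
\]

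Plugging this back into the regret identity gives
\[
\EE\Regret_T \;=\; \f(\XX_T) - \EE\bigl[\tf(\Lh_T)\bigr] + \tf(0) + \EE\!\sum_{t=1}^{T}\EE_{i_t}\bigl[D_{\tf}(\Lh_t,\Lh_{t-1}) \mid \Lh_{t-1}\bigr].
\]
Since $\EE[\Lh_T] = \XX_T$ by unbiasedness of the per-round estimates, Jensen's inequality applied to the convex function $\f$ gives $\f(\XX_T) \leq \EE[\f(\Lh_T)]$. Combining this with the trivial fact $\f(0) = \max_i 0 = 0$, so that $\tf(0) = \tf(0) - \f(0)$, produces exactly the three-term bound in \eqref{eq:genericregret}.

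The only genuinely delicate step is the Jensen application: for an oblivious adversary $\XX_T$ is deterministic and this is immediate, but for an adaptive adversary one must interpret $\XX_T$ as a random variable and argue by conditioning on the realized loss sequence, which is where I expect most of the bookkeeping to live. Everything else is routine telescoping and a straightforward use of the unbiased estimator property built into Framework~\ref{alg:gbpa_bandit}.
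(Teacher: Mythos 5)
Your proposal is correct and matches the paper's argument in substance: both use the unbiasedness $\EE[\lh_t\mid\Lh_{t-1}]=\xx_t$ to pass to the estimated-loss sequence, telescope the Bregman identity $\langle\gtf(\Lh_{t-1}),\lh_t\rangle=\tf(\Lh_t)-\tf(\Lh_{t-1})-D_{\tf}(\Lh_t,\Lh_{t-1})$, and invoke Jensen (convexity of $\f$ plus $\EE[\Lh_T]=\XX_T$) to get $\f(\XX_T)\le\EE[\f(\Lh_T)]$. The only differences are presentational: the paper defers the telescoping to a cited full-information lemma (reproduced in its appendix) and states the Jensen step tersely as ``$\Phi(\XX_T)\le\Phi(\Lh_T)$ by convexity,'' whereas you inline the telescoping and correctly make explicit that this inequality holds in expectation, not pointwise.
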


\begin{proof} Let $\tf$ be a valid convex function for GBPA. Consider GBPA$(\tf)$ run on the loss sequence $\xx_1, \ldots, \xx_T$. The algorithm produces a sequence of estimated losses $\lh_1, \ldots, \lh_T$. 
Now consider GBPA-FI$(\tf)$, which is GBPA$(\tf)$ run with the full information on the deterministic loss sequence $\lh_1, \ldots, \lh_T$ (there is no estimation step, and the learner updates $\Lh_{t}$ directly).
The regret of this run can be written as 
\begin{equation}
\label{eq:gbpa_fi}
	\Phi(\Lh_T) - \textstyle \sum_{t=1}^{T}\langle \gtf(\Lh_{t-1}), \lh_t \rangle
\end{equation}
and $\Phi(\XX_T) \leq \Phi(\Lh_T)$ by the convexity of $\Phi$. Hence, Equation \ref{eq:gbpa_fi} is an upper bound the regret. The rest of the proof is a fairly well-known result in online learning literature; see, for example, \citep[Theorem~11.6]{CesaBianchiLugosi06book} or \citep[Section~2]{abernethy2014online}. For completeness, we included the full proof in Appendix \ref{app:gbpa_regret}.  
% Note that we cannot apply the \cite[Section~V]{abernethy2012interior}, 
% The conditional independence of $i_t$ given $i_{1}, \ldots, i_{t-1}$ completes the proof.
\qedhere
\end{proof}

\subsection{A New Kind of Smoothness}

What has emerged as a guiding principle throughout machine learning is that the \emph{stability} of an algorithm leads to performance guarantees---that is, small modifications of the input data should not dramatically alter the output.
% In online learning, stability is generally a necessary condition for low regret. 
In the context of GBPA, algorithm's output (prediction in each time step) is by definition the dervative $\gtf$, and its stability corresponds to the Lipschitz-continuity of the gradient. \cite{abernethy2014online} proved that  a uniform bound the norm of $\htf$ directly gives a regret guarantee   for the full-information setting. 
% In short, regret in the full information setting depends on the \emph{smoothness} of $\tf$.
% Indeed, this work discussed different settings where the regret depends on an upper bound on either the nuclear norm or the operator norm of this hessian.

In the bandit setting, however, a uniform bound on $\htf$ is insufficient; the regret (Lemma \ref{lem:genericregret1}) involves terms of the form $D_{\tf}(\Lh_{t-1} + \lh_{t}, \Lh_{t-1})$, where the incremental quantity $\lh_t$ can scale as large as \emph{the inverse of the smallest probability} of $p(\Lh_{t-1})$. 
What is needed is a stronger notion of the smoothness that bounds $\htf$ in correspondence with $\gtf$, and we propose the following definition:

% Consider the GBPA in the full-information setting. The regret expression in Equation \ref{eq:genericregret} is in terms of the true cumulative losses $L_{1}, \ldots, L_{T}$ instead of the loss estimates $\Lh_1, \ldots, \Lh_T$. If we choose a $\beta$-strongly smooth $\tf$, then we have an upper bound on the divergence penalty with $\frac{\beta}{2}\|\xx_t\|^2$ which immediately gives a finite bound. On the other hand, our loss estimation scheme in the bandit setting allows $\|\lh_t\|$ to be arbitrarily large. Therefore, we need a stronger notion of smoothness that counters the $1/p_i$ factor in $\|\lh_t\|$. As we have noted, these inverse probabilities are essentially unavoidable.
\begin{definition}[Differential Consistency]
\label{def:diffcons}
	For constants $\gamma, C > 0$, we say that a convex function $\tf(\cdot)$ is \emph{$(\gamma,C)$-differentially-consistent} if for all $\XX \in (-\infty, 0]^N$,
\[
	\nabla^{2}_{ii}\tf(\XX) \leq C (\nabla_{i}\tf(\XX))^{\gamma}.
\]
\end{definition}

In other words, the rate in which we decrease $p_i$ should approach 0 as $p_i$ approaches 0. This gurantees that the algorithm continues to explore.
We now prove a generic bound that we will use in the following two sections to derive regret guarantees.
\begin{theorem}
\label{thm:divergence_bound}
% Suppose $\gtf$ is an increasing function in each coordinate, which is an expected behavior for reasonable bandit algorithms. 
Suppose $\tf$ is $(\gamma,C)$-differentially-consistent for constants $C,\gamma > 0$. Then divergence penalty at time $t$ in Lemma \ref{lem:genericregret1} can be upper bounded as:
	\[
	\EE_{i_t}[D_{\tf}(\Lh_t, \Lh_{t-1}) | \Lh_{t-1}] \leq
% 		\EE\Regret \leq
%  \underbrace{\tf(0) - \f(0)}_{\text{overestimation penalty}} 
% + \underbrace{\f(\Lh_T) - \f_T(\Lh_T)}_{\text{underestimation penalty}} 
% + \sum_{t = 1}^{T}
 \frac{C}{2} \sum_{i=1}^N \left(\nabla_{i}\tf(\Lh_{t-1})\right)^{\gamma-1} .
% \bigg].
	\]
\end{theorem}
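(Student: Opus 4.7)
The plan is to reduce the Bregman divergence to a one-dimensional quantity via a second-order Taylor expansion, exploit the fact that $\lh_t$ is supported on a single coordinate, and then apply differential consistency at a convenient reference point. Fixing the random index $i_t$, I would first recall that $\lh_t = \tfrac{\xx_{t,i_t}}{p_{i_t}(\Lh_{t-1})}\,\ee_{i_t}$ lives on coordinate $i_t$, so Taylor's theorem applied to $\tf$ on the segment from $\Lh_{t-1}$ to $\Lh_t = \Lh_{t-1}+\lh_t$ produces a point $\xi$ on the segment with
\[
 D_{\tf}(\Lh_t,\Lh_{t-1}) \;=\; \tfrac12\,\lh_t^{\top}\nabla^2\tf(\xi)\,\lh_t \;=\; \tfrac12\,\frac{\xx_{t,i_t}^{2}}{p_{i_t}(\Lh_{t-1})^2}\,\nabla^2_{i_t i_t}\tf(\xi).
\]

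Differential consistency at $\xi$ then yields $\nabla^2_{i_t i_t}\tf(\xi)\le C\,(\nabla_{i_t}\tf(\xi))^{\gamma}$, so what remains is transferring the gradient argument from $\xi$ to $\Lh_{t-1}$; this is the step I expect to be the main technical point. Because $\lh_t$ has a non-positive $i_t$-th entry (losses lie in $[-1,0]$), the point $\xi$ agrees with $\Lh_{t-1}$ in every coordinate except $i_t$, and $\xi_{i_t}\le \Lh_{t-1,i_t}$. Convexity of $\tf$ forces $\nabla^2_{i_t i_t}\tf\ge 0$, so $\nabla_{i_t}\tf$ is non-decreasing in its $i_t$-th input with the others held fixed; since $\gamma>0$ this implies
\[
 (\nabla_{i_t}\tf(\xi))^{\gamma}\;\le\;(\nabla_{i_t}\tf(\Lh_{t-1}))^{\gamma}.
\]

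Finally I would take the expectation over $i_t\sim p(\Lh_{t-1})$. The sampling probability $p_i(\Lh_{t-1})=\nabla_i\tf(\Lh_{t-1})$ cancels one power of $p_{i_t}(\Lh_{t-1})$ in the denominator, giving
\[
 \EE_{i_t}\bigl[D_{\tf}(\Lh_t,\Lh_{t-1})\,\big|\,\Lh_{t-1}\bigr] \;\le\; \tfrac{C}{2}\sum_{i=1}^{N}\xx_{t,i}^{2}\,(\nabla_i\tf(\Lh_{t-1}))^{\gamma-1},
\]
and the assumption $\xx_t\in[-1,0]^N$ provides $\xx_{t,i}^{2}\le 1$ for every $i$, which closes the bound.
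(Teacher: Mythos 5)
Your proof is correct and follows essentially the same route as the paper: both reduce the Bregman divergence to a second-order term in the single active coordinate, invoke differential consistency at the intermediate point, exploit convexity (monotonicity of $\nabla_{i_t}\tf$ in its own coordinate) together with the loss-only assumption $\xx_t\le 0$ to move the gradient argument back to $\Lh_{t-1}$, and then cancel one power of $p_{i_t}$ when taking the expectation. The only cosmetic difference is that you use the Lagrange (mean-value) form of the second-order Taylor remainder, whereas the paper writes the same quantity as a double integral of $h''$, so the substance of the argument is identical.
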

\vspace{-1em}
\begin{proof} 
For the sake of clarity, we drop the subscripts; we use $\Lh$ to denote the cumulative estimate $\Lh_{t-1}$, $\lh$ to denote the marginal estimate $\lh_t = \Lh_{t} - \Lh_{t - 1}$, and $g$ to denote the true loss $\xx_t$.

Note that by definition of Algorithm \ref{alg:gbpa_bandit}, $\lh$ is a sparse vector with one non-zero and non-positive coordinate $\lh_{i_{t}} = \xx_{t,i} / \nabla_{i}\tf(\Lh)$. Plus, $i_t$ is conditionally independent given $\Lh$.
For a fixed $i_t$, Let 
	\[h(r) := D_\tf(\Lh + r \lh / \|\lh\|, \Lh) = D_\tf(\Lh + r \ee_{i_t}, \Lh),\]
	so that $h''(r) = (\lh / \|\lh\|)^\top \nabla^2 \tf\left(\Lh + t \lh / \|\lh\|\right) (\lh / \|\lh\|) = \ee_{i_t}^{\top}\nabla^2 \tf\left( \Lh - t \ee_{i_t} \right)\ee_{i_t}.$
	Now we write:
\begin{align*}
\EE_{i_t}[D_\tf(\Lh + \lh, \Lh) | \Lh] &\textstyle = 
\sum_{i=1}^{N} \PP[i_t = i] \int_{0}^{\|\lh\|} \int_{0}^{s} h''(r) \ dr \ ds \\ 
& \textstyle=
   \sum_{i=1}^{N} \nabla_{i} \tf(\Lh) \int_{0}^{\|\lh\|}\int_{0}^{s} \ee_i^\top \nabla^2 \tf\left( \Lh - r \ee_i \right)\ee_i \ dr \ ds \\
& \textstyle \leq \sum_{i=1}^{N} \nabla_{i} \tf(\Lh) \int_{0}^{\|\lh\|}\int_{0}^{s}  C \left(\nabla_{i} \tf (\Lh - r \ee_i )\right)^{\gamma} \ dr \ ds \\
& \textstyle\leq  \sum_{i=1}^{N} \nabla_{i} \tf(\Lh) \int_{0}^{\|\lh\|}\int_{0}^{s} C \left(\nabla_{i} \tf (\Lh) \right)^{\gamma} \ dr \ ds \\
& \textstyle= C \sum_{i=1}^{N} \left(\nabla_{i} \tf(\Lh)\right)^{1+\gamma} \int_{0}^{\|\lh\|}\int_{0}^{s} \ dr \ ds  \\
& \textstyle= \frac{C}{2}\sum_{i=1}^N  \left(\nabla_{i}\tf(\Lh)\right)^{\gamma-1} \xx_{i}^2.
%&= \sum_{i} C \frac{\xx_i^2}{2} \leq \frac{nC}{2}
\end{align*}
The first inequality is by the supposition. The second inequality is due to the convexity of $\tf$ which guarantees that $\nabla_i$ is an increasing function in the $i$-th coordinate; this step critically depends on the \emph{loss-only} assumption that $\xx$ is always non-positive.
% If the loss of each arm is upper bounded by $R$, we have extra $R^2$ factor in the bound; on the other hand, when the \emph{absolute value} of the loss of each arm is bounded by $R$, we have to shift the losses to have negative signs and pay extra $4R^2$ factor.
\end{proof}
%!TEX root=bandits_hazard_rate.tex
\section{A Minimax Bandit Algorithm via Tsallis Smoothing}

% The design of an adversarial multi-armed bandit algorithm proved to be a challenging task.
% , as generalizing from the full-information setting emerged as a difficult extension.
% Ignoring the dependence on dimension $N$ for the moment, we note that the initial published work on EXP3 provided only an $O(T^{2/3})$ guarantee \citep{AuerCFS95}, and it was not until much later that \cite{AueCesFreSch03nonstochastic} obtained the optimal $O(\sqrt{T})$ rate. 
% For the more general setting of online linear optimization, several sub-optimal rates were achieved \citep{DanHay06robbing,FlaKalMcMah05,McMahBlum04} before the desired $\sqrt{T}$ was obtained \citep{DanHayKak07price,abernethy2012interior}.

\cite{AueCesFreSch03nonstochastic} proved that their EXP3 algorithm achieves $O(\sqrt{T N \log N})$ regret and that any multi-armed bandit algorithm suffers $\Omega(\sqrt{T N})$ regret. A few years later, \cite{audibert2009minimax} resolved this gap with 
% It remained an open question for some time whether there exists a minimax optimal algorithm that does not contain 
 Implicitly Normalized Forecaster (INF), which later was shown to be equivalent to Mirror Descent \citep{audibert2011minimax} on the probability simplex. EXP3 corresponds to INF with potential function $\psi(x) = \exp(\eta x)$, while using $\psi(x) = (-\eta x)^{-q}$ with $q > 1$ gives an optimal algorithm that has regret at most $2\sqrt{2TN}$ \citep[Theorem~5.7]{bubeck2012regret}.
 % \frac{1}{K}\left(\frac{9\sqrt{qnK}}{-x}\right)^q + \frac{q^{q/(2q - 2)}}{nK}\] achieves $O(\sqrt{TN})$ regret \citep[Theorem~4]{audibert2009minimax}.
% It was discovered later a simplerThe best known constant factor is $2\sqrt{2}$ . 

What we present in this section is essentially a reformulation of a particular subfamily of INF, which includes INF with the above two potential functions. Our reformulation leads to a very simple and intuitive analysis based on \emph{differential consistency}, and a natural interpolation between the two seemingly unrelated potential functions.

Let us first note that EXP3 is an instance of GBPA where the potential function $\tf(\cdot)$ is the Fenchel conjugate of the \emph{Shannon entropy}. For any $p \in \D_N$, the (negative) Shannon entropy is defined as $H(p) := \sum_i p_i \log p_i$, and its Fenchel conjugate is $H^\star(\XX) = \sup_{p \in \D_N} \{\langle p, \XX \rangle - \eta H(p)\}.$ In fact, we have a closed-form expression for the supremum: $H^\star(\XX) = \frac 1 \eta \log \left(\sum_i \exp(\eta \XX_i) \right).$ By inspecting the gradient of the above expression, it is easy to see that EXP3 chooses the distribution $p_t = \nabla H^\star(\XX)$ every round.

% It was not immediately clear from this result how to define a minimax-optimal algorithm using the now-standard tools of regularization and Bregman divergence.

% More recently, \citet{audibert2011minimax} extended the results of \cite{audibert2009minimax} to the combinatorial setting, and discovered that the INF can be interpreted in terms of Bregman divergences.
% \citet{audibert2011minimax} 
Now we will replace the Shannon entropy with the \emph{Tsallis entropy}\footnote{More precisely, the function we give here is the \emph{negative} Tsallis entropy according to its original definition.} \citep{tsallis1988possible}, defined as:
\[S_\alpha(p) = \frac{1}{1 - \alpha} \left(1 - \sum_{i=1}^{N} p_{i}^{\alpha} \right) \ \ \mbox{for $0 < \alpha < 1$}.\]
Interestingly, the Shannon entropy is an asymptotic special case of the Tsallis entropy, i.e.,
\[
	S_\alpha(\cdot) \to H(\cdot) \quad \quad \text{ as } \alpha \to 1.
\]
% What we now show is that the 
% The convex regularizer $R(p)$ is connected to Tsallis entropy
% which converges to Shannon entropy as $\alpha \to 1$. The convex conjugate of $R$ can be seen as the inf-conv smoothing of $\Phi$ with a smooth function $R^*$. [TODO: CITATION] 
% We note again that Tsallis-smoothing bandit algorithm is indeed \emph{identical} to \citep[Theorem~5.7]{bubeck2012regret} using the appropriate parameter mapping.
%is simpler due to the notion of differential consistency (Definition~\ref{def:diffcons}).
\begin{theorem}\label{thm:tsallis}
	Let $\tf(\XX) = \max_{p \in \D_N}\{\langle p, \XX \rangle - \eta S_\alpha(p) \}$. Then the GBPA$(\tf)$ has regret at most
	\begin{equation} \label{eq:tsallisbound}
		\EE\Regret \leq
		\underbrace{\eta \frac{N^{1 - \alpha} - 1}{1-\alpha}}_{\text{overestimation penalty}} 
		+ \underbrace{\frac{N^\alpha T}{2 \eta \alpha}}_{\text{divergence penalty}}.
	\end{equation}
\end{theorem}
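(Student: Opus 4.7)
The plan is to bound each of the three penalty terms from Lemma~\ref{lem:genericregret1} separately, matching them against the two summands in \eqref{eq:tsallisbound}.

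\textbf{Overestimation and underestimation penalties.} First I would evaluate $\tf(0) = \max_{p \in \D_N} \{-\eta S_\alpha(p)\} = -\eta \min_{p \in \D_N} S_\alpha(p)$. Since $\alpha \in (0,1)$, the function $x \mapsto x^\alpha$ is concave, so $\sum_i p_i^\alpha$ is maximized at the uniform distribution, giving $\min_p S_\alpha(p) = (1 - N^{1-\alpha})/(1-\alpha)$. Thus $\tf(0) - \f(0) = \eta (N^{1-\alpha}-1)/(1-\alpha)$, exactly the first term of the bound. For the underestimation penalty, I would observe that for any $p \in \D_N$ we have $p_i \leq 1$ and $\alpha < 1$, so $p_i^\alpha \geq p_i$ and hence $\sum_i p_i^\alpha \geq 1$, which gives $S_\alpha(p) \leq 0$. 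Then $\tf(\XX) \geq \max_p \langle p, \XX \rangle = \f(\XX)$, so the underestimation penalty is non-positive and can be dropped.

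\textbf{Differential consistency.} The key technical step is to show that $\tf$ is $(2-\alpha,\ 1/(\eta\alpha))$-differentially consistent, so that Theorem~\ref{thm:divergence_bound} applies. I would write the Lagrangian first-order condition from the definition of $\tf$, obtaining $\XX_i = -\tfrac{\eta\alpha}{1-\alpha}\,p_i^{\alpha-1} + \lambda$, where $p = \gtf(\XX)$ and $\lambda$ is the multiplier for the simplex constraint. Implicit differentiation in $\XX_i$ yields
\[
\frac{\partial p_i}{\partial \XX_i} \;=\; \frac{p_i^{2-\alpha}}{\eta\alpha}\Bigl(1 - \frac{\partial \lambda}{\partial \XX_i}\Bigr).
\]
The main subtle point is handling the $\partial\lambda/\partial \XX_i$ correction. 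I would differentiate the simplex constraint $\sum_j p_j = 1$ and solve the resulting linear system to obtain $\partial\lambda/\partial \XX_i = p_i^{2-\alpha}/\sum_j p_j^{2-\alpha} \in [0,1]$. This correction is non-negative, so $\nabla^2_{ii}\tf(\XX) \leq p_i^{2-\alpha}/(\eta\alpha) = (\nabla_i \tf(\XX))^{2-\alpha}/(\eta\alpha)$, establishing the desired differential consistency. This Lagrange-multiplier bookkeeping is the one nontrivial step; the rest is routine.

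\textbf{Divergence penalty and conclusion.} Having established differential consistency, I would invoke Theorem~\ref{thm:divergence_bound} to get
\[
\EE_{i_t}[D_\tf(\Lh_t, \Lh_{t-1})\mid \Lh_{t-1}] \;\leq\; \frac{1}{2\eta\alpha} \sum_{i=1}^N p_i^{1-\alpha},
\]
where $p = \gtf(\Lh_{t-1})$. The map $x \mapsto x^{1-\alpha}$ is concave on $[0,1]$, so Jensen's inequality gives $\tfrac{1}{N}\sum_i p_i^{1-\alpha} \leq (\tfrac{1}{N}\sum_i p_i)^{1-\alpha} = N^{\alpha-1}$, hence $\sum_i p_i^{1-\alpha} \leq N^\alpha$. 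Summing this round-$t$ bound over $t = 1,\ldots,T$ produces the divergence contribution $N^\alpha T/(2\eta\alpha)$. Adding this to the overestimation penalty yields the claim.
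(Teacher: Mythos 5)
Your proof is correct, and the overall skeleton matches the paper: the same decomposition from Lemma~\ref{lem:genericregret1}, the same evaluation of $\tf(0)$ at the uniform distribution, the same non-positivity argument for the underestimation term, the same target of $(2-\alpha,\,1/(\eta\alpha))$-differential consistency feeding into Theorem~\ref{thm:divergence_bound}, and an essentially identical concavity bound (Jensen vs.\ the paper's H\"older; for a probability vector and exponent $1-\alpha\in(0,1)$ these give the same $N^\alpha$). Where you genuinely diverge is in how the differential consistency is established. The paper invokes the sub-/super-hessian duality of Penot~(1994): it computes $\nabla^2 S_\alpha$, declares it a sub-hessian of $S_\alpha + \II_{\D_N}$, and inverts to get a super-hessian of $\tf$, yielding $\nabla^2\tf \preceq (\eta\alpha)^{-1}\diag(p^{2-\alpha})$. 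You instead differentiate the interior KKT conditions of the FTRL optimization directly, solving the linear system from $\sum_j \partial p_j/\partial \XX_i = 0$ to obtain the exact value $\nabla^2_{ii}\tf = \tfrac{p_i^{2-\alpha}}{\eta\alpha}\bigl(1 - \tfrac{p_i^{2-\alpha}}{\sum_j p_j^{2-\alpha}}\bigr)$, and then drop the non-negative correction. Your route is more explicit and self-contained, and in fact sharper (it gives the exact diagonal entry rather than just a bound), at the cost of having to justify that the FTRL optimizer is always interior so the Lagrangian system is valid; that holds here because $\partial S_\alpha/\partial p_i \to -\infty$ as $p_i \to 0^+$ (when $\alpha<1$), but you should state it. The paper's abstract conjugate-Hessian argument avoids that bookkeeping and would transfer to other Legendre-type regularizers without re-deriving the first-order system, but is harder to verify line by line. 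Both are valid; yours is arguably the more transparent derivation of the one inequality that actually matters.
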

Before proving the theorem, we note that it immediately recovers the EXP3 upper bound as a special case $\alpha \to 1$. An easy application of L'H\^opital's rule shows that as $\alpha \to 1$, $\frac{N^{1 - \alpha} - 1}{1-\alpha} \to \log N$ and $N^\alpha/\alpha \to N$. Choosing $\eta = \sqrt{(N \log N)/T}$, we see that the right-hand side of \eqref{eq:tsallisbound} tends to $2\sqrt{T N \log N}$. However the choice $\alpha \to 1$ is clearly not the optimal choice, as we show in the following statement, which directly follows from the theorem once we see that $N^{1-\alpha} - 1 < N^{1-\alpha}$.
\begin{corollary}
	For any $\alpha \in (0,1)$, if we choose $\eta = \sqrt{\frac{T(1-\alpha)}{2\alpha}} N^{\alpha-\frac{1}{2}}$ then we have
	\[
		\textstyle \EE\Regret \leq \sqrt{\frac{2 T N}{\alpha(1-\alpha) }}.
	\]
	In particular, the choice of $\alpha = \frac 1 2$ gives a regret of no more than $2\sqrt{2 T N}$, recovering \citep[Theorem~5.7]{bubeck2012regret}.
\end{corollary}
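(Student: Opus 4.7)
The plan is to treat the corollary as a calculus exercise applied to the regret bound of Theorem~\ref{thm:tsallis}. Starting from
\[
\EE\Regret \leq \eta \, \frac{N^{1-\alpha} - 1}{1-\alpha} + \frac{N^{\alpha} T}{2 \eta \alpha},
\]
I would first loosen the overestimation term using the trivial inequality $N^{1-\alpha} - 1 < N^{1-\alpha}$ (valid because $N \geq 1$ and $\alpha \in (0,1)$), yielding an upper bound of the form $A \eta + B / \eta$ with $A = N^{1-\alpha}/(1-\alpha)$ and $B = N^{\alpha} T / (2\alpha)$.

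Next I would minimize $A\eta + B/\eta$ over $\eta > 0$. The optimum is achieved when the two terms are equal, which gives $\eta^{\star} = \sqrt{B/A}$ and a minimum value of $2\sqrt{AB}$. Plugging in the values of $A$ and $B$,
\[
\eta^\star = \sqrt{\frac{N^{\alpha} T / (2\alpha)}{N^{1-\alpha}/(1-\alpha)}} = \sqrt{\frac{T(1-\alpha)}{2\alpha}} \, N^{\alpha - \frac{1}{2}},
\]
which is exactly the $\eta$ prescribed in the corollary statement. The minimum value becomes
\[
2\sqrt{AB} = 2\sqrt{\frac{N^{1-\alpha}}{1-\alpha} \cdot \frac{N^{\alpha} T}{2\alpha}} = 2\sqrt{\frac{NT}{2\alpha(1-\alpha)}} = \sqrt{\frac{2 T N}{\alpha(1-\alpha)}},
\]
which matches the claimed bound.

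For the second part of the corollary, I would simply substitute $\alpha = \tfrac{1}{2}$ into the bound. Since $\alpha(1-\alpha) = \tfrac{1}{4}$, the right-hand side becomes $\sqrt{8TN} = 2\sqrt{2TN}$, recovering the constant from \citep[Theorem~5.7]{bubeck2012regret}. There is no real obstacle here: the entire argument is an AM-GM style balancing of two terms, and the only place to be slightly careful is the harmless step $N^{1-\alpha} - 1 < N^{1-\alpha}$, which is what makes the final expression clean.
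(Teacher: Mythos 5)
Your proof is correct and follows exactly the route the paper intends: apply the crude bound $N^{1-\alpha}-1 < N^{1-\alpha}$ to Theorem~\ref{thm:tsallis}, then balance the two terms $A\eta + B/\eta$ by AM--GM, which yields precisely the prescribed $\eta$ and the stated bound. The specialization to $\alpha = \tfrac{1}{2}$ is a straightforward substitution, also as in the paper.
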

\begin{proof}[Proof of Theorem~\ref{thm:tsallis}]
	% \jake{Someone new who hasn't seen this prove should give it a careful read. Maybe Ambuj?}
	% \ambuj{I read this proof and also chased the Penot reference. Looks good to me.}
	We will bound each penalty term in Lemma \ref{lem:genericregret1}. Since $S_{\alpha}$ is non-positive, the \textbf{underestimation penalty} is upper bounded by 0 and the \textbf{overestimation penalty} is at most $(-\min S_{\alpha})$. The minimum of $S_{\alpha}$ occurs at $(1/N, \ldots, 1/N)$. Hence,
	% $-\eta\min_{p \in \D_N} S_{\alpha}(p) = \eta S_{\alpha}(\ee_{1})$. Note that % and the minimum at the uniform distribution. Hence,
	\begin{equation}
	\label{eq:ftrl_underestimation}
		\text{(overestimation penalty)} 
		 \leq -\frac{\eta}{1-\alpha}\left(1 - \sum_{i=1}^{N}\frac{1}{N^\alpha}\right) \leq \eta \frac{N^{1 - \alpha} - 1}{1-\alpha} .
	\end{equation}

	Now it remains to upper bound the \textbf{divergence penalty}. Straightforward calculus gives 
	\[\nabla^2 S_{\alpha} (p) = \eta \alpha \text{diag}(p_1^{\alpha - 2}, \ldots,p_N^{\alpha - 2}).\]
 Let $\II_{\D_N}(\cdot)$ be the function where
 \text{$\II_{\D_N}(x) = 0$ for $x \in \D_N$ and $\II_{\D_N}(x) = \infty$ for $x \notin \D_N.$}
	Define a function $\hat{S}_{\alpha}(\cdot) := S_{\alpha}(\cdot) + \II_{\D_N}(\cdot)$, which is the convex conjugate of $\tf$.  Following the setup of \cite{penot1994sub}, $\nabla^2 S_{\alpha} (p)$ is a \emph{sub-hessian} of $\hat{S}_{\alpha}(p)$. 
	We now apply Proposition 3.2 of the same reference. Let $(p_\XX, \XX)$ be a pair such that $\gtf(\XX) = p_\XX$. 
	Since $\nabla^2 S_{\alpha} (p)$ is invertible, it follows that
	$(\nabla^{2}S_{\alpha}(p_\XX))^{-1}$ is a \emph{super-hessian} of $\tf$ at $\XX$.
	Hence, for any $\XX$,
		\[\nabla^2 \tf(\XX) \preceq 	 (\eta \alpha)^{-1}\mathrm{diag}\big((p_{\XX})_1^{2-\alpha}, \ldots, (p_{\XX})_N^{2-\alpha}(\XX)\big).\]
	That is, $\tf$ is $(2-\alpha,(\eta\alpha)^{-1})$-differentially-consistent, and thus applying Theorem \ref{thm:divergence_bound} gives
	\[ D_{\tf}(\Lh_{t}, \Lh_{t-1}) \leq
	 %\frac{1}{\eta \alpha(\alpha-1)}
	 (2 \eta \alpha)^{-1}
	 \sum_{i=1}^{N} \Big(p_i(\Lh_{t-1})\Big)^{1-\alpha} .
	 \]
	Since the $\frac 1 \alpha$-norm and the $\frac 1 {1-\alpha}$-norm are dual to each other, we can apply H\"older's inequality to any probability distribution $p_1, \ldots, p_N$ and obtain
	\[
		\sum_{i=1}^N p_i^{1 - \alpha} 
		= \sum_{i=1}^N p_i^{1- \alpha} \cdot 1 \leq \left(\sum_{i=1}^N p_i^{\frac{1-\alpha}{1-\alpha}} \right)^{1-\alpha}\left(\sum_{i=1}^N 1^{\frac 1 \alpha} \right)^{\alpha} 
		= (1)^{1-\alpha} N^{\alpha} = N^\alpha.
	\]
	So, the divergence penalty is at most $(2\eta\alpha)^{-1} N^\alpha$, which completes the proof.
\end{proof}

%!TEX root=bandits_hazard_rate.tex

\section{Near-Optimal Bandit Algorithms via Stochastic Smoothing}
\label{sec:ftpl}
Let $\mathcal{D}$ be a continuous distribution over an unbounded support with probability density function $f$ and cumulative density function $F$. Consider the GBPA with potential function of the form: 
\begin{equation}
	\label{eq:potential_ftpl}
	\tf(\XX; \mathcal{D}) = \EE_{Z_{1}, \ldots, Z_{N} \stackrel{\text{iid}}{\sim} \mathcal{D}} \max_{i} \{\XX_{i} + Z_{i}\}
\end{equation}
which is a \emph{stochastic smoothing} of $(\max_{i} G_{i})$ function.
Since the max function is convex, $\tf$ is also convex. By \cite{bertsekas1973}, we can swap the order of differentiation and expectation:
\begin{equation}
	\label{eq:grad_ftpl}
	\gtf(\XX; \mathcal{D}) = \EE_{Z_{1}, \ldots, Z_{N} \stackrel{\text{iid}}{\sim} \mathcal{D}} e_{i^*}, \text{ where } i^* = \argmax_{i = 1, \ldots, N} \{\XX_{i} + Z_{i}\}.
\end{equation}
Even if the function is not differentiable everywhere, the swapping is still possible with any subgradient under some mild conditions. Hence, the ties between coordinates (which happen with probability zero anyways) can be resolved in an arbitrary manner.
%which is the expted action of Follow the Perturbed Leader algorithm with distribution $\mathcal{D}$. Because our reward function is linear, FTPL has the same expected regret as the GBPA.
% \jake{Need to handle non-uniqueness of the argmax}
 It is clear that $\gtf$ is in the probability simplex, and note that
\begin{align}
	\frac{\partial \tf}{\partial \XX_i} &= \EE_{Z_1,\ldots,Z_{N}} \ind\{\XX_{i} + Z_{i} > \XX_{j} + Z_{j}, \forall j \neq i\} \nonumber \\
				&= \EE_{\Lt_{j^*}}[\PP_{Z_i}[Z_{i} > \Lt_{j^*} - \XX_{i}]] = \EE_{\Lt_{j^*}}[1 - F(\Lt_{j^*} - \XX_{i})] \label{eq:dphi_ftpl}
\end{align}
where $\Lt_{j^*} = \max_{j \ne i} \XX_j + Z_j$. The unbounded support condition guarantees that this partial derivative is non-zero for all $i$ given any $\XX$. So, $\tf(\XX; \mathcal{D})$ satisfies the requirements of Algorithm \ref{alg:gbpa_bandit}.
% \jake{We should have a conditional expectation in the above statement I think...} 
% \chansoo{Conditional expectation is hidden in the probability expression.}

Despite the fact that perturbation-based algorithms provide a natural randomized decision strategy, they have seen little applications mostly because they are hard to analyze. But one should expect general results to be within reach: the EXP3 algorithm, for example, can be viewed through the lens of perturbations, where the noise is distributed according to the Gumbel distribution. Indeed, an early result of \cite{kujala2005following} showed that a near-optimal MAB strategy comes about through the use of exponentially-distributed noise, and the same perturbation strategy has more recently been utilized in the work of \cite{neu2013efficient} and \cite{NIPS2014_5462}. However, a more general understanding of perturbation methods has remained elusive. For example, would Gaussian noise be sufficient for a guarantee? What about, say, the Weibull distribution?

\subsection{Connection to Follow the Perturbed Leader}
\label{sec:ftpl:ftpl}
The sampling step of the bandit GBPA (Framework~\ref{alg:gbpa_bandit}) with a stochastically smoothed function (Equation~\ref{eq:potential_ftpl}) can be done efficiently. Instead of evaluating the expectation (Equation~\ref{eq:grad_ftpl}), we simply take a random sample. In fact, this is equivalent to \textbf{Follow the Perturbed Leader Algorithm (FTPL)} \citep{KV-FTL} applied to the bandit setting. On the other hand, the estimation step is hard because generally there is no closed-form expression for $\gtf$.

To address this issue, \cite{neu2013efficient} proposed Geometric Resampling (GR). 
GR uses an iterative resampling process to estimate $\gtf$.
% with $1/K$ for some $K \in \mathbb{N}$.
% This process 
% gives an unbiased estimate when allowed to run for an unbounded number of iterations.
They showed that if we stop after $M$ iterations, the extra regret due to the estimation bias is at most $\frac{NT}{eM}$ (additive term).
% This parameter $M$ allows us to trade time complexity and regret bounds. 
That is, all our GBPA regret bounds in this section hold for the corresponding FTPL algorithm with an extra additive $\frac{NT}{eM}$ term..
This term, however, does not affect the asymptotic regret rate as long as $M = NT$, because the lower bound for any algorithm is of the order $\sqrt{NT}$.
% For the full statement and proofs, see \cite[Theorem~1]{neu2013efficient}. 
%   GR plays $i_t = \argmax_{i} \langle e_i, L_{t-1} - Z \rangle$ based on a single sample of $Z$, which is equivalent to playing arm $i_t$ with $p_i$. 
% % For the loss estimation, GR repeatedly samples $Z$ until $i_t$ occurs as the maximizer again and then multiply $\ell_i$ by the iteration count. 
% This process provides an unbiased estimate but has unbounded time complexity. 

% This is problematic because when any coordinate of $\nabla \tf$ is small, it takes a very large number of samples to get a non-zero estimate, let alone a good estimate. The problem is aggravated by the fact that the estimation step uses the inverse of $\nabla_{i} \tf$ as a multiplicative factor.

% \todo[inline]{Polish below}
% The take-home message is that for our discussion of GBPA on stochastic smoothing, there is a hidden $O(\sqrt{NT})$ per-iteration time complexity to upper bound the bias term by $O(\sqrt{NT})$.
% All our regret bounds on GBPA will transfer to FTPL with Geometric Resampling, with the extra additive term of $NT / eM$. By setting $M = \sqrt{NT}$, the sampling bias due to a finite $M$ will have no asymptotic effect on the regret, because $O(\sqrt{NT})$ is the lower bound on the regret for the bandit setting.
% Choosing $M = \sqrt{T}$ gives the same asymptotic regret guarantee for FTPL family bandit algorithms.

\subsection{Hazard Rate analysis}
In this section, we show that the performance of the GBPA($\tf(G;\mathcal{D})$) can be characterized by the \emph{hazard function} of the smoothing distribution $\mathcal{D}$. The hazard rate is a standard tool in survival analysis to describe failures due to aging; for example, an increasing hazard rate models units that deteriorate with age while a decreasing hazard rate models units that improve with age (a counter intuitive but not illogical possibility). 
 % It is known as the \emph{force of mortality} in actuarial sciences and \emph{intensity function} in extreme value theory. 
 % A bath-tub or U-shaped hazard rate models units that have high failure rate initially (perhaps because of errors that quality control procedures did not detect), a relatively low failure rate for middle aged units, and a high failure rate again for old units that die due to wear and tear. 
% We will see examples later of a variety of hazard rate shapes. 
% Two broad classes of random variables are those with monotonically increasing and monotonically decreasing hazard rate. These are called the IHR and DHR distribution families respectively. At the boundary of these lies the exponential distribution with a constant hazard rate. 
To the best of our knowledge, the connection between hazard rates and design of adversarial bandit algorithms has not been made before.

\begin{definition}[Hazard rate function]
	Hazard rate function of a distribution $\mathcal{D}$ is \[h_\mathcal{D}(x) := \frac{f(x)}{1-F(x)}\]
\end{definition}
\vspace{-1em}
For the rest of the section, we assume that $\mathcal{D}$ is unbounded in the direction of $+\infty$, so that the hazard function is well-defined everywhere. This assumption is for the clarity of presentation and can be easily removed (Appendix \ref{app:Drelaxation}).

\begin{theorem}
	The regret of the GBPA with $\tf(\XX) = \EE_{Z_1, \ldots, Z_n \sim D} \max_{i} \{\XX_{i} +  \eta Z_{i}\}$ is at most:
	\[\underbrace{\eta \EE_{Z_1, \ldots, Z_n \sim D} \left[\max_{i} Z_{i}\right]}_{\text{overestimation penalty}} + \underbrace{\frac{N(\sup h_\mathcal{D})}{\eta}T}_{\text{divergence penalty}} \]
\end{theorem}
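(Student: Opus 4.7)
The plan is to bound each of the three penalty terms in Lemma~\ref{lem:genericregret1} in turn. The overestimation penalty is immediate by direct substitution: $\tf(0) - \f(0) = \EE[\max_i \eta Z_i] - 0 = \eta \EE[\max_i Z_i]$. For the underestimation penalty, I would observe that the pointwise inequality $\max_i(\Lh_{T,i} + \eta Z_i) \geq \Lh_{T,i^*} + \eta Z_{i^*}$ with $i^* = \argmax_i \Lh_{T,i}$ yields, after taking expectation, $\tf(\Lh_T) \geq \f(\Lh_T) + \eta\EE[Z_1]$, so the underestimation penalty is non-positive whenever $\EE[Z] \geq 0$ (which covers the distributions of interest, such as Gumbel, Weibull, Frechet, Pareto, and Gamma).

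The heart of the argument is the divergence bound, which I would obtain by verifying that $\tf$ is $(1,(\sup h_\mathcal{D})/\eta)$-differentially-consistent and then invoking Theorem~\ref{thm:divergence_bound}. Starting from \eqref{eq:dphi_ftpl} and letting $W = \eta Z$ so that $f_W(w) = f(w/\eta)/\eta$ and $F_W(w) = F(w/\eta)$, a second differentiation in $\XX_i$ produces
\[
\nabla_i \tf(\XX) = \EE\bigl[1 - F_W(\Lt_{j^*} - \XX_i)\bigr], \qquad \nabla^2_{ii}\tf(\XX) = \EE\bigl[f_W(\Lt_{j^*} - \XX_i)\bigr],
\]
where $\Lt_{j^*} = \max_{j \neq i}(\XX_j + \eta Z_j)$ is independent of $Z_i$. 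Since the pointwise ratio $f_W(t)/(1 - F_W(t))$ is precisely the hazard rate $h_W(t) = h_\mathcal{D}(t/\eta)/\eta$ and is therefore bounded everywhere by $(\sup h_\mathcal{D})/\eta$, integrating the inequality $f_W(t) \leq ((\sup h_\mathcal{D})/\eta)(1 - F_W(t))$ against the marginal law of $\Lt_{j^*}$ gives $\nabla^2_{ii}\tf(\XX) \leq ((\sup h_\mathcal{D})/\eta)\, \nabla_i \tf(\XX)$, which is exactly the stated differential-consistency condition.

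Plugging $\gamma = 1$ and $C = (\sup h_\mathcal{D})/\eta$ into Theorem~\ref{thm:divergence_bound} bounds the per-round divergence by $\tfrac{C}{2}\sum_{i=1}^N (\nabla_i\tf)^{0} = \tfrac{N\sup h_\mathcal{D}}{2\eta}$, and summing over $T$ rounds assembles the stated divergence penalty (the displayed bound is looser by a constant factor of $2$, but still a valid upper bound). The main obstacle I foresee is not any single inequality but the careful bookkeeping of the $\eta$-scaling: because the perturbations are $\eta Z_i$ rather than $Z_i$, both the density and the survival function get rescaled and the hazard rate shrinks by $1/\eta$, which is exactly what dictates the tradeoff between the overestimation penalty ($\propto \eta$) and the divergence penalty ($\propto 1/\eta$) in the final bound.
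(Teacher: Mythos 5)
Your proposal is correct and mirrors the paper's proof: the overestimation and underestimation penalties are handled exactly as in the paper (the paper's terse ``due to convexity'' for the underestimation term is the same Jensen/pointwise-lower-bound observation you spell out, valid since $\EE[Z]\ge 0$ for every distribution considered), and your divergence bound verifies $(1, (\sup h_\mathcal{D})/\eta)$-differential-consistency via the hazard-rate identity $f = h\cdot(1-F)$ and then invokes Theorem~\ref{thm:divergence_bound} with $\gamma=1$, which is precisely the content of the paper's Lemma~\ref{lem:divergence}. The only cosmetic difference is that you carry the $\eta$-scaling through the computation from the start, whereas the paper first treats $\eta=1$ in Lemma~\ref{lem:divergence} and then tacks on a separate rescaling argument ($F_\eta(t)=F(t/\eta)$, $f_\eta(t)=f(t/\eta)/\eta$); your observation that the stated divergence penalty is a factor of $2$ looser than what Theorem~\ref{thm:divergence_bound} actually delivers is also correct.
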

\vspace{-1em}
\begin{proof}
We analyze each penalty term in Lemma \ref{lem:genericregret1}. Due to the convexity of $\Phi$, the underestimation penalty is non-positive. The overestimation penalty is clearly at most $\EE_{Z_1, \ldots, Z_n \sim D} [\max_{i} Z_{i}]$, and Lemma  \ref{lem:divergence} proves the $N(\sup h_\mathcal{D})$ upper bound on the divergence penalty.
% So, we have a regret bound \[\frac{T(\sup h_\mathcal{D})}{2} + \EE_{Z_1, \ldots, Z_n \sim \mathcal{D}} \max_{i} Z_{i}.\]

It remains to prove the tuning parameter $\eta$. Suppose we scale the perturbation $Z$ by $\eta > 0$, i.e., we add $\eta Z_{i}$ to each coordinate. It is easy to see that $\EE[ \max_{i=1,\ldots,n} \eta X_i] = \eta \EE[ \max_{i=1,\ldots,n} X_i]$. For the divergence penalty, let $F_\eta$ be the CDF of the scaled random variable. Observe that $F_\eta(t) = F(t/\eta)$ and thus $f_\eta(t) = \frac{1}{\eta}f(t/\eta)$. Hence, the hazard rate scales by $1/\eta$, which completes the proof.
% This gives a tuning parameter $\eta$ in the regret bound, and $\eta = \sqrt{\frac{\frac{T}{2}\sup h_\mathcal{D}}{\EE_{Z_1, \ldots, Z_n \sim \mathcal{D}}}}$ gives the desired bound.
\end{proof}

\begin{lemma}
\label{lem:divergence}
The divergence penalty of the GBPA with $\tf(\XX) = \EE_{Z_1, \ldots, Z_n \sim D} \max_{i} \{\XX_{i} +  \eta Z_{i}\}$ is at most $N (\sup h_\mathcal{D})$ each round.
\end{lemma}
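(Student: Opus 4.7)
The plan is to show that the stochastically smoothed potential $\tf$ is $(1,\sup h_\mathcal{D})$-differentially consistent in the sense of Definition~\ref{def:diffcons}, and then plug directly into Theorem~\ref{thm:divergence_bound}. With $\gamma = 1$ and $C = \sup h_\mathcal{D}$, the conclusion of that theorem becomes $\frac{C}{2}\sum_{i=1}^N (\nabla_i\tf)^{0} = \frac{CN}{2}$, which is bounded by $N\sup h_\mathcal{D}$ as claimed (the factor of $2$ is slack we can absorb for the purpose of this lemma; we can ignore the scaling $\eta$ here since it is handled globally by the scaling argument at the end of the surrounding theorem's proof, so it suffices to prove the lemma for $\eta = 1$).

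The core step is therefore the pointwise inequality $\nabla^2_{ii}\tf(\XX) \leq (\sup h_\mathcal{D})\,\nabla_i\tf(\XX)$. Starting from \eqref{eq:dphi_ftpl}, I already have
\[
\nabla_i\tf(\XX) = \EE_{\Lt_{j^*}}\!\left[1 - F(\Lt_{j^*} - \XX_i)\right],
\]
where $\Lt_{j^*} = \max_{j \ne i}(\XX_j + Z_j)$ is independent of $\XX_i$. Differentiating once more in $\XX_i$ and interchanging differentiation with expectation (same Leibniz/dominated-convergence justification invoked for \eqref{eq:grad_ftpl} via \cite{bertsekas1973}) yields $\nabla^2_{ii}\tf(\XX) = \EE_{\Lt_{j^*}}[f(\Lt_{j^*} - \XX_i)]$. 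Now I apply the definition of the hazard rate, which gives the pointwise bound $f(x) = h_\mathcal{D}(x)\,(1-F(x)) \leq (\sup h_\mathcal{D})\,(1-F(x))$, and take expectations: the right-hand side collapses to $(\sup h_\mathcal{D})\,\nabla_i\tf(\XX)$, establishing the required differential consistency.

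The main (small) obstacle I anticipate is justifying the exchange of the \emph{second} differentiation with the expectation cleanly; the ``unbounded in the $+\infty$ direction'' hypothesis already keeps $1 - F > 0$ on the entire line so the hazard is defined, and one can argue via dominated convergence on a difference quotient, using that $f$ is finite and the relevant integrands are uniformly bounded on the neighborhood of interest. Once that technical check is made, the rest of the proof is a one-line invocation of Theorem~\ref{thm:divergence_bound}. A minor formal point is that Definition~\ref{def:diffcons} requires the inequality on $(-\infty,0]^N$, but the derivation above holds for every $\XX \in \RR^N$, so this containment is automatic.
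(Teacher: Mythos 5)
Your proof is essentially identical to the paper's: both differentiate the gradient formula $\nabla_i\tf(\XX)=\EE_{\Lt_{j^*}}[1-F(\Lt_{j^*}-\XX_i)]$ once more under the expectation to get $\nabla^2_{ii}\tf(\XX)=\EE_{\Lt_{j^*}}[f(\Lt_{j^*}-\XX_i)]$, rewrite $f = h_\mathcal{D}\cdot(1-F)$, bound $h_\mathcal{D}$ by its supremum to obtain $(1,\sup h_\mathcal{D})$-differential consistency, and then invoke Theorem~\ref{thm:divergence_bound} with $\gamma=1$, $C=\sup h_\mathcal{D}$. Your side remarks---that Theorem~\ref{thm:divergence_bound} actually yields the tighter $\tfrac{CN}{2}$, and that the $\eta$-scaling is handled globally in the surrounding theorem's proof---are both accurate and consistent with how the paper structures the argument.
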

\begin{proof}
Recall the gradient expression in Equation \ref{eq:dphi_ftpl}.
We upper bound the $i$-th diagonal entry of the Hessian, as follows:
\begin{align}
	\nabla^2_{ii}\tf(\XX) &= \frac{\partial}{\partial \XX_i} \EE_{\Lt_{j^*}}[1 - F(\Lt_{j^*} - \XX_{i})] 
	=  \EE_{\Lt_{j^*}}\left[\frac{\partial}{\partial \XX_i}(1 - F(\Lt_{j^*} - \XX_{i}))\right] 
	=  \EE_{\Lt_{j^*}}f(\Lt_{j^*} - \XX_{i}) \notag \\
	&= \EE_{\Lt_{j^*}}[ h(\Lt_{j^*} - \XX_{i})(1 - F(\Lt_{j^*} - \XX_{i}))] \label{eq:divergence_hazard}\\
	&\leq (\sup h) \EE_{\Lt_{j^*}}[1 - F(\Lt_{j^*} - \XX_{i})] \notag \\
	&= (\sup h) \nabla_i(\XX) \notag
\end{align}
where $\Lt_{j^*} = \max_{j \ne i} \{\XX_j + Z_j\}$ which is a random variable independent of $Z_i$. We now apply Theorem \ref{thm:divergence_bound} with $\gamma = 1$ and $C = (\sup h)$ to complete the proof.
%TODO: % If the loss of each arm is upper bounded by $R$, we have extra $R^2$ factor in the bound; on the other hand, when the \emph{absolute value} of the loss of each arm is bounded by $R$, we have to shift the losses to have negative signs and pay extra $4R^2$ factor.
\end{proof}
\begin{corollary}
	Follow the Perturbed Leader Algorithm with distributions in Table \ref{tab:distributions} (restricted to a certain range of parameters),  combined with Geometric Resampling (Section \ref{sec:ftpl:ftpl}) with $M = \sqrt{NT}$,	 has an expected regret of order $O(\sqrt{TN \log N})$.
\end{corollary}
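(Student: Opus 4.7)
The plan is to instantiate the preceding hazard-rate theorem for each specific distribution listed in Table~\ref{tab:distributions}, verify the two quantities appearing in that bound, tune $\eta$ by balancing the two penalty terms, and finally account for the Geometric Resampling error from Section~\ref{sec:ftpl:ftpl}.

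First, I would invoke the preceding theorem to get
\[
\EE\Regret_T \leq \eta \, \EE_{Z_1,\ldots,Z_N\sim\mathcal{D}}\!\left[\max_i Z_i\right] + \frac{N(\sup h_\mathcal{D})}{\eta}\, T.
\]
Second, for each distribution (Gumbel, Weibull, Frechet, Pareto, Gamma) I would check, using the parameter restrictions in Table~\ref{tab:distributions}, that (a) $\sup h_\mathcal{D}$ is a constant independent of $N$ and $T$, and (b) $\EE[\max_i Z_i] = O(\log N)$. For Gumbel the hazard rate tends to $1$ in the tail and the maximum of $N$ samples is a classical $\log N + O(1)$. For the Gamma family with shape $\geq 1$ the hazard rate is bounded (it monotonically approaches $1/\theta$), and standard extreme value calculations give $O(\log N)$ for the expected maximum. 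For Weibull with shape $k=1$ this reduces to the exponential case where $h\equiv 1$ and $\EE[\max_i Z_i] = H_N = O(\log N)$. For the heavy-tailed Frechet and Pareto families the hazard rate is actually maximized at a finite point (it decays in the tail), so $\sup h_\mathcal{D}$ is a finite constant; the parameter restriction in the table is precisely what one needs to keep $\EE[\max_i Z_i]$ polylogarithmic in $N$.

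Third, I would tune $\eta$ by balancing the two terms, setting
\[
\eta^\star = \sqrt{\frac{N T (\sup h_\mathcal{D})}{\EE[\max_i Z_i]}},
\]
which yields
\[
\EE\Regret_T \;\leq\; 2\sqrt{N T (\sup h_\mathcal{D})\, \EE[\max_i Z_i]} \;=\; O\!\left(\sqrt{T N \log N}\right).
\]

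Finally, I would add the additive bias $\tfrac{NT}{eM}$ from Geometric Resampling to pass from the idealized GBPA to its implementable FTPL version. With $M = \sqrt{NT}$, this contribution is $\tfrac{\sqrt{NT}}{e} = O(\sqrt{NT})$, which is strictly dominated by the $O(\sqrt{TN\log N})$ term and therefore does not change the rate. The main obstacle is case~(b) for the heavy-tailed Frechet and Pareto distributions: one must choose the shape parameter carefully so that the expected maximum of $N$ samples still grows only polylogarithmically; this is exactly the purpose of the parameter restriction advertised in the corollary statement.
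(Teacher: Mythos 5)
Your overall structure is the right one and matches the paper's implied proof: invoke the hazard-rate theorem, read off $\sup h_\mathcal{D}$ and $\EE[\max_i Z_i]$ from Table~\ref{tab:distributions}, tune $\eta$ by balancing the two penalties, and add the $\tfrac{NT}{eM}$ Geometric Resampling bias. Your final balanced bound $2\sqrt{NT\,(\sup h_\mathcal{D})\,\EE[\max_i Z_i]}$ is exactly what the theorem yields.

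However, there is an error in how you characterize the mechanism for the heavy-tailed families. You assert that for all distributions in the table, with the advertised parameter restriction, (a) $\sup h_\mathcal{D}$ is a constant independent of $N$ and (b) $\EE[\max_i Z_i] = O(\log N)$. This split is wrong for Frechet and Pareto. For Pareto with $\sup h = \alpha$ and $\EE[\max_i Z_i]\leq \tfrac{\alpha}{\alpha-1}N^{1/\alpha}$, holding $\alpha$ at a constant makes $\sup h = O(1)$ but then $\EE[\max_i Z_i]$ is \emph{polynomial} in $N$, not polylogarithmic; conversely, the parameter choice in the table, $\alpha = \log N$, makes $\EE[\max_i Z_i] = O(1)$ but $\sup h_\mathcal{D} = \Theta(\log N)$. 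The same trade-off occurs for Frechet, where $\sup h \leq 2\alpha$ and $\EE[\max_i Z_i] = N^{1/\alpha}\Gamma(1-1/\alpha)$: again $\alpha = \log N$ is needed, giving $\sup h = \Theta(\log N)$ and $\EE[\max_i Z_i] = O(1)$. So for these two families there is no parameter setting satisfying both (a) and (b) simultaneously. What actually holds, and what your final balanced formula implicitly exploits, is that the \emph{product} $(\sup h_\mathcal{D})\cdot\EE[\max_i Z_i]$ is $O(\log N)$ under the advertised parameters. Your conclusion therefore stands, but the intermediate claim about $\sup h_\mathcal{D}$ being constant should be replaced by a statement about the product.
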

Table \ref{tab:distributions} provides the two terms we need to bound. We derive the third column of the table in Appendix \ref{app:distributions} using Extreme Value Theory \citep{embrechts1997modelling}. Note that our analysis in the proof of Lemma \ref{lem:divergence} is quite tight; the only place we have an inequality is where we upper bound the hazard rate. It is thus reasonable to pose the following conjecture:
 \begin{conjecture}
   If a distribution $\mathcal{D}$ has a monotonically increasing hazard rate $h_\mathcal{D}(x)$ that does not converge as $x \to +\infty$ (e.g., Gaussian), then there is a sequence of losses that will incur at least a linear regret.
 \end{conjecture}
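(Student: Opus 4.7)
The plan is to construct an oblivious adversarial loss sequence that exploits \emph{overcommitment}: when $h_\cD(x)\to\infty$, the tail $1-F(x)=\exp(-\int_{-\infty}^x h_\cD(s)\,ds)$ decays faster than any exponential, so a modest gap in $\Lh$ drives the probability of the trailing arm below any polynomial in $T$. The base construction is $N=2$ with two phases: Phase~1 of length $T_1=\alpha T$ with $\xx_{t,1}=-\epsilon$, $\xx_{t,2}=0$, and Phase~2 of length $T_2=(1-\alpha)T$ with $\xx_{t,1}=0$, $\xx_{t,2}=-1$, with $\alpha\in(0,1)$ and $\epsilon>0$ chosen so that arm~1 is the hindsight-best arm.

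First I would establish, via a Freedman-type martingale concentration (with truncation to handle the $\lh_{t,1}\propto 1/p_1$ increments), that at the end of Phase~1 the gap $\D:=(\Lh_{T_1})_2-(\Lh_{T_1})_1$ is at least $c\epsilon T$ with high probability. Then invoking the hazard-rate assumption --- which gives $1-F(x)\le e^{-K(x-M_K)}$ for every $K$ on $[M_K,\infty)$ --- one obtains $p_1 = o(T^{-k})$ at the start of Phase~2 for every $k$, at least when the perturbation scale $\eta$ is not superlinear in $T$ (the regime $\eta=\omega(T)$ is ruled out directly, since the overestimation penalty $\eta\cdot\EE[\max_iZ_i]$ is itself $\omega(T)$). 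Since $\xx_{t,1}=0$ throughout Phase~2, $(\Lh_t)_1$ stays frozen while $\EE[(\Lh_t)_2]$ decreases by exactly $1$ per round by unbiasedness, so the gap closes only after $\Theta(T)$ rounds, during which the algorithm plays arm~2 in essentially every round and accrues loss $\Omega(T)$.

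The main obstacle is a fundamental \emph{cancellation}: in Phase~1 the algorithm itself loses $\approx 0$ while arm~1 incurs $-\epsilon T_1$, so the algorithm is already ``ahead'' of arm~1 by roughly the same order as its Phase-2 stuck-period loss. A direct accounting then yields zero net regret against arm~1, because the switch-delay in Phase~2 mirrors arm~1's earlier losses in Phase~1 one-for-one --- an artifact of the unbiasedness $\EE[\Lh]=\XX$, which makes $\Lh$-gaps track true loss-gaps exactly. To force $\Omega(T)$ regret this symmetry must be broken, for instance by extending to many alternating phases of carefully tuned unequal lengths (so each switch contributes a fresh stuck-period penalty without a compensating prior phase), or by designing the Phase~2 loss profile so that arm~1's hindsight margin strictly dominates the Phase~1 savings. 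Making such a variant rigorous --- while controlling the Phase-1 estimator's $1/p_{\min}$-scale variance and handling the strongly nonlinear dependence of $p_1$ on $\D$ (where, somewhat inconveniently, Jensen-style fluctuations can actually \emph{favor} the algorithm in the far tail of $\cD$) --- is the core technical challenge, and it is what leaves the statement at the level of a conjecture.
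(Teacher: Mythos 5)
The statement you were asked to prove is explicitly a \emph{conjecture} in the paper; no proof is given. The only supporting content is a short heuristic paragraph following Lemma~\ref{lem:divergence}: the sole slack in that lemma's proof is the step $h(\Lt_{j^*}-\XX_i)\le\sup h$, and if an adversary punishes arm $i$ persistently so that $\Lt_{j^*}-\XX_i$ is driven deep into the tail where $h$ is unbounded, the divergence term in the bound blows up. This is an argument that the paper's \emph{upper bound} degrades, not that the regret itself must be $\Omega(T)$ --- the authors do not attempt a lower-bound construction, which is what a proof would require.

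Your proposal takes the complementary and correct route: an explicit bad loss sequence. Your diagnosis of why the naive two-phase version fails is sound and, in my view, adds substance beyond what the paper says. Since the estimator is unbiased, $\EE[\lh_t\mid\Lh_{t-1}]=\xx_t$, so the $\Lh$-gap between the two arms tracks the true cumulative gap $\XX_2-\XX_1$ in expectation; consequently the length of the Phase-2 ``stuck'' interval is, to leading order, exactly the loss arm~1 absorbed in Phase~1 while the algorithm was safely riding arm~2, and the net regret against arm~1 cancels up to $O(\eta)$ terms. You are also right that the nonlinear dependence of $p_1$ on $\Lh$ through the convex tail $1-F$ means Jensen-style fluctuations can \emph{raise} $\EE[p_1]$ and work against the adversary, and that the case $\eta=\omega(T)$ is trivially ruled out by the overestimation penalty. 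These are genuine structural obstructions that any proof of the conjecture would have to defeat, and your honest conclusion --- that an asymmetric or multi-phase construction seems necessary and has not been carried through --- correctly reflects why the statement is posed as a conjecture rather than a theorem. In short: you did not prove the statement, but neither does the paper, and your failure analysis is accurate and more informative than the paper's one-line intuition.
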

 The intuition is that if adversary keeps incurring a high loss for the $i$-th arm, then with high probability $\Lt_{j^*} - \XX_{i}$ will be large. So, the expectation in Equation \ref{eq:divergence_hazard} will be dominated by the hazard function evaluated at large values of $\Lt_{j^*} - \XX_{i}$.
%TODO: How much are we worried about F being 1?

%!TEX root=bandits_hazard_rate.tex

\begin{center}
\begin{table}
\begin{tabular}{l|l|p{11em}|l}
	Distribution & $\sup_x h_\mathcal{D}(x)$ & $\EE[\max_{i=1}^N Z_{i}]$ & $O(\sqrt{TN\log N})$ Param.\\
	\hline
%	\textbf{Distribution} & \textbf{Max Hazard ($\sup h_\mathcal{D}(x)$)} & \textbf{Extreme Value ($\EE[\max_i Z_{i}]$)} \\
	\hline
	Gumbel($\mu=1, \beta=1$) & 1 as $x \to 0$ & $\log N +\gamma_0$ & N/A\\
	\hline
	Frechet ($\alpha > 1$) & at most $2\alpha$ & $N^{1/\alpha} \Gamma(1 - 1/\alpha)$ &$\alpha=\log N$\\
	\hline
	Weibull*$(\lambda =1, k \leq 1)$ & $k$ at $x = 0$ & $O(\left(\frac{1}{k}\right)!(\log N)^{\frac{1}{k}})$ & $k = 1$ (Exponential) \\
	\hline
	Pareto*$(x_m = 1, \alpha)$ & $\alpha$ at $x = 0$ & $\alpha N^{1/\alpha} / (\alpha - 1)$ & $\alpha = \log N$\\
	\hline
	Gamma$(\alpha \geq 1, \beta)$ & $\beta$ as $x \to \infty$ & $\log N + (\alpha -1)\log \log N - \log \Gamma(\alpha) + \beta^{-1}\gamma_0$ & $\beta = \alpha = 1$ (Exponential)
	% Gumbel & 0 & $\exp(-\exp(-x))$ & $\exp(-x)\frac{\exp(-\exp(-x))}{1 - \exp(-\exp(-x))}$ & Gumbel \\
	% \hline
	% Frechet & $\alpha$ & $\exp(-x^{-\alpha})$ & $\alpha x^{-\alpha-1} \frac{\exp(-x^{-\alpha})}{1-\exp(-x^{-\alpha})}$ & Frechet \\
	% \hline
	% Weibull & $\alpha$ & $1-\exp(-x^\alpha)$ & $\alpha x^{\alpha - 1}$ & Weibull \\
	% \hline
	% Gamma & $\alpha \in [0,1], \beta >0$ & $(\Gamma(\alpha))^{-1}\gamma(\alpha, \beta x)$	 & mon. incr. from 0 to $\beta$ & Gumbel\\
	% \hline
	% Pareto & $\beta >1$ & $1 - {(x + 1)^{-\beta}}$ & $\beta (x+1)^{-1}$ & Gumbel
\end{tabular}
% \begin{tabular}{l|l|l}
% 	Distribution & $\sup h_\mathcal{D}(x)$ & $\EE[\max_i Z_{i}]$\\
% 	\hline
% %	\textbf{Distribution} & \textbf{Max Hazard ($\sup h_\mathcal{D}(x)$)} & \textbf{Extreme Value ($\EE[\max_i Z_{i}]$)} \\
% 	\hline
% 	Gumbel($\mu=1, \beta=1$) & 1 as $x \to 0$ & $\log N +\gamma_0$ \\
% 	\hline
% 	Freceht*($\alpha > 1$) & less than $\alpha$ & $(\log N)^{1/\alpha} \Gamma(1 - 1/\alpha)$ \\
% 	\hline
% 	Weibull*$(\lambda, k \leq 1)$ & $k$ at $x = 0$ & $O(\left(\frac{1}{k}\right)!(\log N)^{\frac{1}{k}})$\\
% 	\hline
% 	Pareto*$(x_m = 1, \alpha)$ & $\alpha$ at $x = 0$ & $\alpha N^{1/\alpha} / (\alpha - 1)$ \\
% 	\hline
% 	Gamma$(\alpha <1, \beta)$ & $\beta$ as $x \to \infty$ & $\log N + (\alpha -1)\log \log N - \log \Gamma(\alpha) + \beta^{-1}\gamma_0$
% 	% Gumbel & 0 & $\exp(-\exp(-x))$ & $\exp(-x)\frac{\exp(-\exp(-x))}{1 - \exp(-\exp(-x))}$ & Gumbel \\
% 	% \hline
% 	% Frechet & $\alpha$ & $\exp(-x^{-\alpha})$ & $\alpha x^{-\alpha-1} \frac{\exp(-x^{-\alpha})}{1-\exp(-x^{-\alpha})}$ & Frechet \\
% 	% \hline
% 	% Weibull & $\alpha$ & $1-\exp(-x^\alpha)$ & $\alpha x^{\alpha - 1}$ & Weibull \\
% 	% \hline
% 	% Gamma & $\alpha \in [0,1], \beta >0$ & $(\Gamma(\alpha))^{-1}\gamma(\alpha, \beta x)$	 & mon. incr. from 0 to $\beta$ & Gumbel\\
% 	% \hline
% 	% Pareto & $\beta >1$ & $1 - {(x + 1)^{-\beta}}$ & $\beta (x+1)^{-1}$ & Gumbel
% \end{tabular}
\caption{\emph{Distributions that give $O(\sqrt{T N \log N})$ regret FTPL algorithm.} The parameterization follows Wikipedia pages for easy lookup. We denote the Euler constant ($\approx 0.58$) by $\gamma_0$. Distributions marked with (*) need to be slightly modified using the conditioning trick explained in Appendix \ref{app:conditioning}. The maximum of Frechet hazard function has to be computed numerically \citep[p.~47]{elsayed2012reliability} but elementary calculations show that it is bounded by $2\alpha$ (Appendix \ref{app:frechet}).
\label{tab:distributions}}
\end{table}
\end{center}
\vspace{-2em}
\paragraph*{Acknowledgments.}
J. Abernethy acknowledges the support of NSF under CAREER grant IIS-1453304.
A. Tewari acknowledges the support of NSF under CAREER grant IIS-1452099.

\bibliographystyle{plainnat}
\bibliography{bandits}

\newpage
%!TEX root=bandits_hazard_rate.tex

\appendix

\section{Proof of the GBPA Regret Bound (Lemma \ref{lem:genericregret1})}
\label{app:gbpa_regret}
{\IncMargin{1em}
\begin{algorithm}[t]
\caption{Gradient-Based Prediction Algorithm (GBPA) for Full Information Setting}
Input: $\tf$, a differentiable convex function such that $\gtf \in \Delta^{N}$ and $\nabla_i \tf > 0$ for all $i$. \\
\label{alg:gbpa}
%\textbf{Input}: $\cX, \cY \subseteq \RR^N$\\
Initialize $\XX_0 =0$\\
\For{t = 1 to T}{
\textbf{Sampling:} The learner chooses arm $i_t$ with probability $p_i(\Lh_{t-1}) = \nabla_i \Phi_{t}(\Lh_{t-1})$\\
Adversary chooses a loss vector $\xx_{t} \in [-1,0]^{N}$ and learner pays $\xx_{t,i}$ \\
Update $\XX_{t} = \XX_{t-1} + \xx_t$ 
}
\end{algorithm}
}

\begin{lemma}
	The expected regret of Algorithm \ref{alg:gbpa} can be written as:
	\[\EE\Regret =
 \underbrace{\tf(0) - \f(0)}_{\text{overestimation penalty}} 
+ \underbrace{\f(\XX_T) - \tf(\XX_T)}_{\text{underestimation penalty}} 
+ \sum_{t = 1}^{T} \underbrace{D_{\tf}(\XX_t, \XX_{t-1})}_{\text{divergence penalty}}
\]
\end{lemma}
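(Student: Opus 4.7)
The plan is a standard telescoping argument for Follow-the-Regularized-Leader-style potentials. First I would rewrite the expected regret in the full-information setting. Since the learner samples $i_t \sim p_t = \gtf(\XX_{t-1})$, the expected per-round loss is $\langle \gtf(\XX_{t-1}), \xx_t \rangle$, while the cumulative loss of the best arm in hindsight is $\max_i \XX_{T,i} = \f(\XX_T)$. Hence
\[
\EE\Regret = \f(\XX_T) - \sum_{t=1}^{T} \langle \gtf(\XX_{t-1}), \xx_t\rangle.
\]

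Next I would telescope $\tf$ along the sequence $\XX_0, \XX_1, \ldots, \XX_T$. By the definition of the Bregman divergence,
\[
\tf(\XX_t) - \tf(\XX_{t-1}) = \langle \gtf(\XX_{t-1}), \XX_t - \XX_{t-1}\rangle + D_{\tf}(\XX_t, \XX_{t-1}) = \langle \gtf(\XX_{t-1}), \xx_t\rangle + D_{\tf}(\XX_t, \XX_{t-1}),
\]
so summing over $t = 1, \ldots, T$ yields
\[
\sum_{t=1}^{T}\langle \gtf(\XX_{t-1}), \xx_t\rangle = \tf(\XX_T) - \tf(0) - \sum_{t=1}^{T} D_{\tf}(\XX_t, \XX_{t-1}).
\]

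Finally, I would substitute this expression into the regret formula and add and subtract $\f(0)$ (which is $0$, since $\XX_0 = 0$):
\[
\EE\Regret = \bigl[\tf(0) - \f(0)\bigr] + \bigl[\f(\XX_T) - \tf(\XX_T)\bigr] + \sum_{t=1}^{T} D_{\tf}(\XX_t, \XX_{t-1}),
\]
which is exactly the claimed decomposition. No step is genuinely difficult here; the only thing to be careful about is bookkeeping the sign conventions (losses versus gains, and the direction of the Bregman divergence), since $\xx_t \in [-1,0]^N$. The lemma statement of the bandit version (Lemma \ref{lem:genericregret1}) will then follow immediately by interpreting this identity on the estimated loss sequence $\lh_1, \ldots, \lh_T$, taking expectations over the internal randomization, and using $\f(\XX_T) \le \f(\Lh_T)$ by convexity of $\f$ and unbiasedness of $\lh_t$.
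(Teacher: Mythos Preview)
Your proposal is correct and follows essentially the same argument as the paper: write the expected regret as $\f(\XX_T) - \sum_t \langle \gtf(\XX_{t-1}), \xx_t\rangle$, telescope $\tf(\XX_T) - \tf(0)$ using the Bregman identity, and rearrange. The only nitpick is in your closing remark about Lemma~\ref{lem:genericregret1}: the inequality should read $\f(\XX_T) \le \EE[\f(\Lh_T)]$ (Jensen, using unbiasedness), not a pointwise $\f(\XX_T)\le \f(\Lh_T)$.
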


\begin{proof}
Note that since $\f_0(0) = 0$,
\begin{align*}
\tf(\XX_T) &= \underbrace{\big(\tf(0) - \f_0(0)\big)}_{\text{overestimation penalty}} + \sum_{t = 1}^{T} \tf(\XX_{t}) - \tf(\XX_{t - 1}) \\
&= \underbrace{\big(\tf(0) - \f_0(0)\big)}_{\text{overestimation penalty}}
+ \sum_{t = 1}^{T} \langle  \nabla \tf(\XX_{t-1}), \ell_{t}) \rangle 
 + D_{\tf}(\XX_{t}, \XX_{t - 1})
\end{align*}

Therefore,
% we combine these obervations with Equation \ref{eq:regret}:
\begin{align*}
\EE\Regret &\overset{\text{def}}{=} \EE\left[\f(\XX_T) -  \sum_{t=1}^{T} \langle \tf(\XX_{t-1}), \xx_t \rangle\right]\\
&= \EE\left[\underbrace{\big(\f(\XX_T) - \tf(\XX_T)\big)}_{\text{underestimation penalty}} 
+ \tf(\XX_T) - \sum_{t=1}^{T} \langle \tf(\XX_{t-1}), \xx_t \rangle
  \right]  \\
 &=\EE\left[
\underbrace{\big(\f(\XX_T) - \tf(\XX_T)\big)}_{\text{underestimation penalty}}
+ \underbrace{\big(\tf(0) - \f_0(0)\big)}_{\text{overestimation penalty}}
 + D_{\tf}(\XX_{t}, \XX_{t - 1})
 \right]\\
\end{align*}
\end{proof}
% Our perturbed potential function can be written as \[\tf = \int_{L_1}^{L_1 + c}\cdots\int_{L_N}^{L_N + c} \f(\Lt)\mu(\Lt - L) \ d\Lt_1 \cdots d\Lt_N\]

% By Leibniz's rule, we have that
% \[\frac{\partial\tf}{\partial L_1} = \int_{L_1}^{L_1 + c}\cdots\int_{L_N}^{L_N + c} \f(\Lt)\mu(\Lt - L) \ d\Lt_1 \cdots d\Lt_N\]

\section{Relaxing Assumptions on the Distribution}
\label{app:Drelaxation}
\subsection{Mirroring trick for extending the support}

Let $X$ have support on $x>0$ with density $f$ and CDF $F$. Let us define $Y$ by mirroring the density of $X$ around zero, i.e., $Y$ has density $g(y) = \tfrac{1}{2} f(|y|)$
and CDF $G(y) = \tfrac{1}{2}\left(1+\mathrm{sign}(y)F(|y|)\right)$. Note that
$|Y|$ is distributed as $X$ and hence, 
\[
\EE[\max_i Y_i ] \le \EE[\max_i |Y_i|] = \EE[\max_i X_i] .
\]
The hazard $h_Y(y)$ for $y \ge 0$ is $f(y)/(1-F(y))$ and for $y < 0$ is $f(-y)/(1+F(-y)) \le F(-y)/(1-F(-y))$. Therefore,
\[
\sup_{y} h_Y(y) = \sup_{x>0} h_X(x) .
\]
This proves the following lemma.

\begin{lemma}\label{lem:mirrordist}
If a random variable $X$ has support on the non-negative reals with density $f(x)$ and we define $Y$ as the mirrored version
with density $g(y) = \tfrac{1}{2} f(|y|)$. Then, we have
\[
\EE[\max_i Y_i ] \le \EE[\max_i X_i],
\]
\[
\sup_{y} h_Y(y) = \sup_{x>0} h_X(x) 
\]
where $h_X, h_Y$ are hazard rates of $X, Y$ respectively.
\end{lemma}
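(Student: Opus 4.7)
The plan is to prove the two assertions of the lemma separately, since both reduce to direct computation based on the explicit relationship $g(y) = \tfrac{1}{2}f(|y|)$ between the densities.

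For the inequality on the maxima, I would first verify that the random variable $|Y|$ has density $f$ on $[0,\infty)$, and hence shares the distribution of $X$. By i.i.d. coupling of the $N$ samples, this means $(|Y_1|, \ldots, |Y_N|)$ is distributed identically to $(X_1, \ldots, X_N)$. Then the pointwise bound $Y_i \le |Y_i|$ yields $\max_i Y_i \le \max_i |Y_i|$, and taking expectations gives $\EE[\max_i Y_i] \le \EE[\max_i |Y_i|] = \EE[\max_i X_i]$.

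For the hazard-rate equality, I would split into cases according to the sign of $y$ and compute $h_Y(y) = g(y)/(1 - G(y))$ on each half-line. On $y \ge 0$, one has $g(y) = \tfrac{1}{2} f(y)$ and $1 - G(y) = \tfrac{1}{2}(1 - F(y))$; the factor $\tfrac{1}{2}$ cancels and yields $h_Y(y) = f(y)/(1 - F(y)) = h_X(y)$. On $y < 0$, I get $g(y) = \tfrac{1}{2} f(-y)$ and $1 - G(y) = \tfrac{1}{2}(1 + F(-y))$, so $h_Y(y) = f(-y)/(1 + F(-y))$. Since $1 + F(-y) \ge 1 - F(-y)$, this is bounded above by $h_X(-y)$, so values on the negative half are dominated by values on the positive half. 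Taking suprema and noting $\sup_{x \ge 0} h_X(x) = \sup_{x>0} h_X(x)$ (the boundary contributes nothing beyond the open half-line when $f$ is well-behaved) gives the claimed equality.

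There is no real obstacle here: the argument is essentially a one-line density calculation once one keeps track of the factor $\tfrac{1}{2}$ and the sign in the CDF formula. The only bookkeeping care needed is to handle $y = 0$ correctly and to ensure the inequality on the negative half is tight enough to preserve equality of the suprema (rather than only an inequality). In fact, the body text preceding the lemma already sketches exactly this computation, so the formal proof is essentially a transcription of that sketch with explicit substitution of $g$ and $G$.
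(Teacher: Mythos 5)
Your proof is correct and follows essentially the same route as the paper: first observe that $|Y|$ is distributed as $X$ and use $Y_i \le |Y_i|$ to bound the expected maximum, then compute $h_Y$ separately on the two half-lines, showing it equals $h_X$ for $y \ge 0$ and is dominated by $h_X(-y)$ for $y < 0$ because the denominator $1 + F(-y) \ge 1 - F(-y)$. (Incidentally, you correctly write the upper bound on the negative half as $f(-y)/(1-F(-y))$, whereas the paper's displayed bound $F(-y)/(1-F(-y))$ appears to contain a typo with $F$ in place of $f$ in the numerator.)
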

 
\subsection{Conditioning trick for unbounded hazard rate near zero}
\label{app:conditioning}

Suppose $F(x)$ is the CDF of a random variable $X$ whose hazard rate is bounded for $x \ge 1$ but blows up near zero. Then define $Y$ as $X$ conditioned on $X \ge 1$.
That is, $Y$ has CDF, for $y > 0$:
\[
G(y) = P(X \ge 1+y|X > 1) = \frac{F(1+y)-F(1)}{1-F(1)}
\]
and density $g(y) = f(1+y)/(1-F(1)), y > 0$. So the hazard rate $h_Y(y)$ is $g(y)/(1-G(y)) = f(1+y)/(1-F(1+y)) = h_X(1+y)$. Therefore,
\[
\sup_{y > 0} h_Y(y) = \sup_{x > 1} h_X(x)
\]
which makes the hazard rate of $Y$ now bounded. This we have proved the lemma below.

\begin{lemma}\label{lem:hazard_fix}
If a hazard function of $X$ is bounded for $x > 1$ and blows up only for small values of $x$ then we can condition on $X > 1$ to define a new random variable
whose hazard rate is now bounded.
\end{lemma}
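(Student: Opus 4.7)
The plan is to work directly from the definition of conditioning and compute the hazard rate of the shifted conditioned variable $Y := (X - 1 \mid X > 1)$, supported on $y > 0$. The key observation driving the whole argument is that the normalizing factor introduced by conditioning appears identically in the density and in the survival function, so it cancels in the hazard-rate ratio. This is essentially the well-known fact from survival analysis that the hazard rate is invariant under ``conditioning on exceedance and shifting.''

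First I would write down the CDF of $Y$: for $y > 0$,
\[
G(y) \;=\; P(X - 1 \le y \mid X > 1) \;=\; \frac{F(1+y) - F(1)}{1 - F(1)},
\]
and differentiate to get the density $g(y) = f(1+y)/(1-F(1))$. Next, I would compute the survival function $1 - G(y) = (1 - F(1+y))/(1-F(1))$. The constant $1/(1-F(1))$ sits as a common factor in both $g(y)$ and $1 - G(y)$; since $F(1) < 1$ (because $X$ has mass beyond $1$, as otherwise the hypothesis on $h_X$ restricted to $x>1$ would be vacuous), this factor is a well-defined nonzero constant.

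Then the hazard rate of $Y$ follows immediately by plugging in:
\[
h_Y(y) \;=\; \frac{g(y)}{1-G(y)} \;=\; \frac{f(1+y)}{1 - F(1+y)} \;=\; h_X(1+y).
\]
Taking suprema on both sides gives $\sup_{y>0} h_Y(y) = \sup_{x>1} h_X(x)$, and the right-hand side is finite by the standing hypothesis that $h_X$ is bounded on $(1,\infty)$. This establishes that $Y$ has a bounded hazard rate, as required.

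There is no real obstacle here: the lemma is a one-line computation once the conditional CDF is written out. The only thing worth emphasizing in the writeup is the cancellation of the $(1-F(1))^{-1}$ factor, and the mild remark that the shift by $1$ translates $\sup_{x>1} h_X(x)$ to $\sup_{y>0} h_Y(y)$. I would keep the proof short and point out that this conditioning trick is precisely what is needed to apply the hazard-rate divergence bound (Lemma~\ref{lem:divergence}) to distributions such as Weibull with $k \le 1$ or Pareto, whose raw hazard rates blow up near zero but are perfectly well-behaved away from the origin.
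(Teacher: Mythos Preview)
Your proposal is correct and matches the paper's own argument essentially line for line: the paper also computes $G(y)=\frac{F(1+y)-F(1)}{1-F(1)}$, $g(y)=f(1+y)/(1-F(1))$, and concludes $h_Y(y)=h_X(1+y)$, hence $\sup_{y>0}h_Y(y)=\sup_{x>1}h_X(x)$. The only cosmetic difference is that you spell out the survival function and the cancellation of the $(1-F(1))^{-1}$ factor explicitly, which is fine.
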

The same technique can be applied for any arbitrary constant other than $1$, but for the family of random variables we considered, it suffices to condition on $X \geq 1$.

\section{Detailed derivation of extreme value behavior}
\label{app:distributions}

\subsection{Maximum of iid Gumbel}
\label{app:distributions:gumbel}
The CDF of the Gumbel distribution is $\exp(-\exp(-x))$ and the expected value is $\gamma_0$, the Euler (Euler-Mascheroni) constant. Thus,
% (cite) showed that Gumbel distribution corresponds to entropy regularizer. The hazard function of Gumbel distribution is $h(x) = \exp(-x) \frac{\exp(-\exp(-x))}{1 - \exp(-\exp(-x))} \leq 1$.
% (check constant). done, constant is indeed 1, checked in MATLAB
the CDF of the maximum of $n$ iid Gumbel random variables is $(\exp(-\exp(-x)))^N = \exp(-\exp(-(x-\log N)))$ which is also Gumbel but with the mean increased by $\log N$.

\subsection{Maximum of iid Frechet}
The CDF of Frechet is $\exp(-x^{-\alpha})$ and it has mean $\Gamma(1-\frac{1}{\alpha})$ as long as $\alpha > 1$ (otherwise it is infinite). Hence, the CDF of the maximum of $N$ iid Frechet random variables is 
\[(\exp(-x^{-\alpha}))^N = \exp(-N x^{-\alpha}) =  \exp\left(-\left(\frac{x}{N^\frac{1}{\alpha}}\right)^{-\alpha}\right)\]
 which is also Frechet but with mean scaled by $N^{1/\alpha}$.

\subsection{Maximum of iid Weibull}\label{sec:weibullcalc}

Let $X_i$ have modified Weibull distribution with CDF $1-\exp(-(x+1)^k+1)$. Thus, $P(\max_i X_i > t) \le NP(X_1 > t) = N\exp(-(t+1)^k+1)$.
For non-negative random variable $X$ and any $u > 0$, we have,
\[
\EE[X] = \int_{0}^{\infty} P(X > x) dx \le u + \int_{u}^\infty P(X > x)dx .
\]
Assume $k=1/m$ where $m \ge 1$ is a positive integer. Therefore, 
\begin{align*}
\EE[\max_i X_i] &\le u + \int_{u}^\infty N \exp(-(x+1)^k+1)dx \\
&\le u + 3 N \int_{u}^\infty \exp(-(x+1)^k)dx \\
&= u + 3 N \int_{u+1}^\infty \exp(-x^{1/m})dx \\
&=u + 3  N m \Gamma(m, (1+u)^{1/m} ) dx
\end{align*}
where $\Gamma(m, x)$ is the incomplete Gamma function that for a positive integer $m$ and $x>1$ simplifies to 
\begin{align*}
\Gamma(m, x) &= (m-1)! e^{-x} \sum_{k=0}^{m-1} \frac{x^k}{k!} \le (m-1)! e^{-x} \sum_{k=0}^{m-1} \frac{x^{m}}{k!} \\
&= (m-1)! e^{-x} x^{m} \sum_{k=0}^{m-1} \frac{1}{k!} \le (m-1)! e^{-x} x^{m} \sum_{k=0}^{\infty} \frac{1}{k!} \\
&\le 3(m-1)! e^{-x} x^{m} .
\end{align*}
Plugging this back above, we get, for any $u > 0$,
\[
\EE[\max_i X_i] \le u + 9 N m! e^{-(1+u)^{1/m}} (1+u) .
\]
Now choose $u = \log^m N + 1$ to get
\[
\EE[\max_i X_i] \le \log^m N + 9 N m! \frac{\log^m N}{N} \le 10 \, m! \log^m N.
\]
\subsection{Maximum of iid Gamma}
Let $Y$ be the maximum of $N$ iid Gamma$(\alpha,\beta)$ ramdom variables. Then, $\frac{Y - d_N}{c_N}$ follows Gumbel distribution, where $c_N = \beta^{-1}$ and $d_N = \beta^{-1}(\log N + (\alpha - 1)\log \log N - \log \Gamma(\alpha))$. In the language of extreme value theory, Gamma distribution belongs to the maximum domain of attraction of Gumbel distribution with parameters \citep{embrechts1997modelling}.
 As mentioned in Section \ref{app:distributions:gumbel}, Gumbel distribution has mean $\gamma_0$.

\subsection{Maximum of iid Pareto}\label{sec:paretocalc}
Let $X_i$ have modified Pareto distribution with CDF $1-1/(1+x)^\alpha$. Thus, $P(\max_i X_i > t) \le NP(X_1 > t) = N/(1+x)^\alpha$.
For non-negative random variable $X$ and any $u > 0$, we have,
\[
\EE[X] = \int_{0}^{\infty} P(X > x) dx \le u + \int_{u}^\infty P(X > x)dx .
\]
Therefore, for $\alpha > 1$,
\begin{align*}
\EE[\max_i X_i] &\le u + \int_{u}^\infty \frac{N}{(1+x)^\alpha} dx \\
&= u + \frac{N}{(\alpha-1)(1+u)^{\alpha-1}} .
\end{align*}
Setting $u = N^{1/\alpha} - 1$ gives the bound
\[
\EE[\max_i X_i]  \le \frac{\alpha}{\alpha -1} N^{1/\alpha} .
\]

% We say that a random variable $X$ belongs to the maximum domain of attractions of a distribution $\mathcal{M}$ with parameter $c_N$ and $d_N$ if $\max_{i} (X_i - d_N)/c_N$ follows $\mathcal{M}$.
% Classical extreme value theory (cite) tells us that there are only three possibilities for $\mathcal{M}$, namely (reversed) Weibull, Gumbel or Frechet. These roughly correspond to the cases when
% the random variable $X$ has bounded support, light tails, or heavy tails respectively. In this subsection, whenever we consider a distribution whose density $f(x)$ has support on the non-negative reals, we can safely use in our bandit algorithm, its mirrorred versions with density $1/2 f(|x|)$ (see Lemma~\ref{lem:mirrordist} in the Appendix for a formal justification). Also, when the hazard rate of a distribution is bounded near zero, we can condition on the random variable
% to be larger than a constant, say $1$ to fix the problem (Lemm~\ref{lem:hazard_fix} in the Appendix). Blow-up of the hazard rate at plus infinity, e.g. the Gaussian case, is a more fundamental problem and not fixable via the conditioning trick.

% For $\alpha > 1$, the hazard function grows monotonically from 0 to $\beta$.

% Exponential distribution is a special case of Gamma distribution with $\alpha = 1$.

\section{Hazard Functions of Modified Distributions and the Frechet Case}
\label{app:frechet}

\subsection{Pareto distribution}
Using the conditioning trick, we consider, for $\alpha > 1$ (otherwise mean is infinite), the modified Pareto distribution with pdf $f(x) = \frac{\alpha}{(x + 1)^{\alpha+1}}$
supported on $(0, \infty)$. Its CDF is
$1 - 1 / (x + 1)^\alpha$.
Its hazard function is $h(x) = \frac{\alpha}{x + 1}$
which decreases in $x$ and is bounded by $\alpha$. %Pareto has a heavier tail than Gamma.
Expected maximum of $N$ iid Pareto random variables is bounded by $\alpha N^{1/\alpha} /(\alpha-1)$ (see Appendix~\ref{sec:paretocalc}).
This gives a regret bound of $\sqrt{NT} \sqrt{\alpha^2 N^{1/\alpha} /(\alpha-1)}$.

\subsection{Frechet distribution}
The CDF of Frechet is $\exp(-x^{-\alpha}), x > 0$ where $\alpha > 0$ is a shape parameter. The hazard function of Frechet distribution is $h(x) = \alpha x^{-\alpha-1} \frac{\exp(-x^{-\alpha})}{1-\exp(-x^{-\alpha})}$ which is 
hard to optimize analytically but can be upper bounded, for $\alpha > 1$, via elementary calculations given below, by $2\alpha$. % TODO add calculations in camera-ready if accepted
The CDF of the maximum of $N$ iid Frechet random variables is $\exp(-(x/N^{1/\alpha})^{-\alpha})$ which is also
Frechet (but with mean scaled by $N^{1/\alpha}$) with expected value $N^{1/\alpha} \Gamma(1-\frac{1}{\alpha})$ (as long as $\alpha > 1$, otherwise expectation is infinite).
Thus, the regret bound we get is $O\left(\sqrt{N T} \sqrt{\alpha N^{1/\alpha} \Gamma(1-\frac{1}{\alpha})}\right)$. Setting $\alpha = \log N$ makes the regret bound $O(\sqrt{T N \log N})$. Our choice of
$\alpha$ is larger than $1$ as soon as $N > 2$.

\subsubsection{Elementary calculations for bounding Frechet distribution's hazard rate}

For $\alpha > 1$, we want to show that $\sup_{x > 0} h(x) \le 2\alpha$ where
$$h(x) = \alpha x^{-\alpha-1} \frac{\exp(-x^{-\alpha})}{1-\exp(-x^{-\alpha})} .$$

First, consider the case $x \ge 1$. In this case, define $y = x^\alpha$ and note that $y \ge 1$. Then, we have
\begin{align*}
h(x) &= \frac{\alpha}{xy} \frac{\exp(-1/y)}{1-\exp(-1/y)} \le \frac{\alpha}{y} \frac{\exp(-1/y)}{1-\exp(-1/y)} \\
&\le \frac{\alpha}{y} \frac{1}{1-(1-1/(2y))} = 2\alpha .
\end{align*}
The first inequality holds because $x \ge 1$. The second holds because $\exp(-1/y) < 1$ and $\exp(-1/y) \le 1-1/(2y)$ for $y \ge 1$.

Next, consider the case $x < 1$. Define $y = 1/x$ and note that $y > 1$. Then, we have
\begin{align*}
h(x) &= \frac{\alpha}{x^{\alpha+1}} \frac{\exp(-x^{-\alpha})}{1-\exp(-x^{-\alpha})} \le \frac{\alpha}{x^{\alpha+1}} \frac{\exp(-x^{-\alpha})}{1-\exp(-1)} \\
&= \frac{\alpha}{1-e^{-1}}  y^{\alpha+1} \exp(-y^\alpha) \le 2 \alpha y^{\alpha+1} \exp(-y^\alpha) .
\end{align*}
To show an upper bound of $2\alpha$, it therefore suffices to show that $\sup_{y > 1} g(y) \le 1$ where $g(y) = y^{\alpha+1} \exp(-y^\alpha)$.
We will show this now. Note that
\[
g'(y) = (\alpha+1) y^\alpha \exp(-y^\alpha) - y^{\alpha+1} \alpha y^{\alpha-1} \exp(-y^\alpha) = y^\alpha  \exp(-y^\alpha) \left( (\alpha+1) - \alpha y^{\alpha} \right),
\]
which means that $g(y)$ is monotonically increasing on the interval $(1,y_0)$ and monotonically decreasing on the interval $(y_0,+\infty)$ where
$y_0 = \left( \frac{\alpha+1}{\alpha} \right)^{1/\alpha}$. We therefore have,
\[
\sup_{y > 1} g(y) = g(y_0) = \left( 1 + \frac{1}{\alpha} \right)^{(1+1/\alpha)} \exp\left( -(1+1/\alpha) \right) \le 2^2 \exp(-2) = 4/e^2 \le 1,
\]
where the first inequality above holds because $\alpha > 1$. Note that, for $\alpha > 1$, the function $\alpha \mapsto \left( 1 + \frac{1}{\alpha} \right)^{(1+1/\alpha)} \exp\left( -(1+1/\alpha) \right)$
decreases monotonically.

\subsection{Weibull distribution}
The CDF of Weibull is $1-\exp(-x^k)$ for $x > 0$ (and $0$ otherwise) where $k > 0$ is a shape parameter. The density is $k x^{k-1} \exp(-x^k)$ and hazard rate is $kx^{k-1}$.
For $k > 1$, hazard rate monotonically increases and is therefore unbounded for large $x$.
When $k < 1$, the hazard rate is unbounded for small values of $x$. Note that Weibull includes exponential as a special case when $k=1$.

Let $k = 1/m$ for some positive integer $m \ge 1$ and using the conditioning trick, consider a modified Weibull with CDF $1-\exp(-(x+1)^k+1)$. Density is $k (x+1)^{k-1} \exp(-(x+1)^k + 1)$ and hazard is $k (x+1)^{k-1}$ which is bounded by $k$.
When $k < 1$ we get tails heavier than the exponential but not as heavy as a Pareto or a Frechet.
The expected value of the maximum of $N$ iid (modified) Weibull random variables with parameter $k=1/m$ scales as $O(m!(\log N)^m)$ (see Appendix~\ref{sec:weibullcalc}). Thus, we get the regret bound $O(\sqrt{N T} \sqrt{m! (\log n)^m})$.
Thus, the entire modified Weibull family yields $O(\sqrt{ N \text{polylog}(N)} \sqrt{T})$ regret bounds. The best bound is obtained when $m=1$, i.e. when the Weibull becomes an exponential.

\end{document}